\documentclass{MITcsail}
\usepackage[T1]{fontenc}
\usepackage{algpseudocode} 

\usepackage{array} 
\usepackage{supertabular} 

\usepackage{enumitem} 
\usepackage{hyperref} 
\usepackage{url} 
\usepackage{graphicx} 
\usepackage{float} 

\usepackage{amsmath} 
\usepackage{amssymb} 
\usepackage{amsthm} 
\usepackage{algorithm}

\usepackage{url}


\newtheorem{requirement}{Requirement}[section]
\usepackage{xcolor}
\hypersetup{
    colorlinks=true,
    linkcolor=purple,
    citecolor=purple,
    filecolor=darkred,
    urlcolor=purple,
}
\usepackage[numbers, sort&compress]{natbib}

\title{OML:  A Primitive for Reconciling Open Access with Owner Control in AI Model Distribution
}

\author{%
  Zerui Cheng$^{1}$,
  Edoardo Contente$^{2}$,
  Ben Finch$^{2}$,
  Oleg Golev$^{2}$,
  Jonathan Hayase$^3$,\\
  Andrew Miller$^4$,
  Niusha Moshrefi$^1$,
  Anshul Nasery$^3$,
  Sandeep Nailwal$^2$,\\
  Sewoong Oh$^3$,
  Himanshu Tyagi$^2$,
  Pramod Viswanath$^{1,2}$\\
\vspace{1em}
\normalfont{\small $^{1}$ Princeton University}\\
\normalfont{\small $^{2}$ Sentient Foundation}\\
\normalfont{\small $^{3}$ University of Washington }\\
\normalfont{\small $^{4}$ University of Illinois Urbana-Champaign}\\
  \vspace{1em}
  \texttt{\{zerui.cheng, pramodv\}@princeton.edu}\\
}


\begin{document}
\maketitle

\begin{abstract}
The current paradigm of AI model distribution presents a fundamental dichotomy: models are either closed and API-gated, sacrificing transparency and local execution, or openly distributed, sacrificing monetization and control. We introduce \textbf{OML} (Open-access, Monetizable, and Loyal AI Model Serving), a primitive that enables a new distribution paradigm where models can be freely distributed for local execution while maintaining cryptographically enforced usage authorization. We are the first to introduce and formalize this problem, introducing rigorous security definitions tailored to the unique challenge of white-box model protection: \emph{model extraction resistance} and \emph{permission forgery resistance}. We prove fundamental bounds on the achievability of OML properties and characterize the complete design space of potential constructions, from obfuscation-based approaches to cryptographic solutions. To demonstrate practical feasibility, we present OML 1.0, a novel OML construction leveraging AI-native model fingerprinting coupled with crypto-economic enforcement mechanisms. Through extensive theoretical analysis and empirical evaluation, we establish OML as a foundational primitive necessary for sustainable AI ecosystems. This work opens a new research direction at the intersection of cryptography, machine learning, and mechanism design, with critical implications for the future of AI distribution and governance.

\end{abstract}

\section{Introduction}
\label{intro}

Artificial Intelligence (AI) is advancing at an incredible pace, reshaping diverse fields from household robotics \cite{roomba, atlas} and superhuman game-playing \cite{silver2017mastering, silver2017masteringgo, silver2018general} to intricate formal mathematical reasoning \cite{GoogleDeepMindAlphaproof}, protein structure elucidation \cite{jumper2021highly, evans2021protein}, accelerated drug discovery \cite{bostrom2018expanding, strokach2020fast, schneider2020rethinking}, and novel mathematical exploration \cite{romera2024mathematical}. The emergence of powerful generative models like GPT series \cite{openai2023gpt4, bubeck2023sparks}, OpenAI \texttt{o3} \cite{openaio1, jaech2024openai}, and DeepSeek \texttt{R1} \cite{guo2025deepseek} marks a watershed moment, heralding their potential to fundamentally transform human endeavors.

Yet, the rapid advancement of artificial intelligence has created an unprecedented challenge in model distribution. Current approaches force an unnecessary tradeoff between accessibility and control, limiting both innovation and sustainable development of AI systems. This paper introduces and formalizes OML, a primitive that reconciles these conflicting requirements.

\subsection{The Fundamental Distribution Problem}

Modern AI development, responsible for the ubiquitous systems we see today, has been significantly shaped by open-source contributions. Until recently, core libraries and powerful models such as BERT~\cite{devlin2018bert} and early iterations of GPT~\cite{openai2023gpt4, bubeck2023sparks} were openly available. However, as AI matured and its profound economic potential became evident, many large companies that initially embraced open development have transitioned to closed strategies. These entities have often geared their efforts towards establishing significant positions in the AI economy, with a model where others act as high-level users of the AI they build~\cite{openai, forefront_ai, ai21}. Consequently, AI is currently delivered to users predominantly via the following two dominant paradigms, each with critical limitations.

\textbf{Closed API Services}: In this paradigm, AI models are primarily accessed through public APIs~\cite{openai, forefront_ai, ai21}. Platforms like OpenAI's GPT and Anthropic's Claude maintain complete control over model execution, enabling monetization and usage governance. Such centralized, closed services offer benefits like scalability and the implementation of certain safety measures, including content moderation and misuse prevention. Conversely, this approach can lead to monopolization, rent-seeking behaviors, and significant privacy concerns. Users also lack ultimate control over the paid service, as model owners can arbitrarily filter inputs, modify outputs, or change the underlying model without direct user consent. While options for fine-tuning closed models may exist, such customization is typically constrained by the limitations of the provided API.  Users cannot verify model behavior, ensure data privacy, or maintain operational independence.

\textbf{Open-weight distribution}: In this paradigm, creators release model weights, often with their architectures, allowing users to download and run inference locally. Platforms like Hugging Face enable unrestricted model distribution, providing transparency, local execution, and modification capabilities. This grants users full control over model selection, inference processes, and the ability to build upon these models (e.g., through fine-tuning) and compose them with other AI systems. Meta's Llama model series~\cite{touvron2023llama}, DeepSeek~\cite{guo2025deepseek}, and the wide array of models available on platforms like Hugging Face exemplify this approach. However, once models are released in this manner, creators lose direct control over their subsequent use, cannot enforce safety constraints and prevent unethical applications, and lack mechanisms for sustainable monetization. This creates a tragedy of the commons where innovation is undersupported.

This dichotomy is not merely a business model choice but represents a \textbf{fundamental technical limitation} in our current infrastructure. We lack the primitives necessary to enable models that are simultaneously open-access for local execution and controlled for authorized and ethical usage.

\subsection{OML: Technical Requirements for Ideal Resolution}
\label{sec:intro:4OML}

While both closed and open models offer distinct advantages, we pursue maximal openness comparable to current open-weight distributions, augmented with mechanisms for owner control. An ideal solution must therefore reconcile three seemingly contradictory properties, which we term ``OML":
\begin{itemize}
\item \textbf{[O] Open-access}. Models must be freely distributable for local execution, analogous to compiled binaries where functionality is accessible while implementation details remain protected. Once distributed, models become immutable artifacts independent of their creators, enabling user data privacy, consistent service quality, and on-premise deployment.

\item \textbf{[M] Monetizable}. The framework enables model owners to capture economic value through granular, per-inference authorization mechanisms. Each model invocation requires individual permission from model owners, and an economic Nash Equilibrium, where rational users obtain proper authorization (such as purchasing access tokens) rather than attempting circumvention, guarantees that the all agreements and licenses are enforced.

\item \textbf{[L] Loyal}. Models must technically enforce owner-defined policies through pre-hoc authorization verification. The model produces high-utility outputs exclusively when presented with valid, cryptographically-bound permissions, ensuring compliance with safety and ethical constraints before computation rather than through legal remedies after violation.
\end{itemize}

Note that, while monetizability and loyalty both enable governance, they address fundamentally different requirements. Monetizability concerns economic sustainability and can leverage post-hoc mechanisms such as collateral-backed compliance. Loyalty demands pre-hoc technical enforcement to prevent generation of harmful or policy-violating outputs before they occur.

This challenge becomes particularly acute under white-box access conditions where users possess complete visibility into model weights, architecture, and computation flow. Unlike traditional software where obfuscation can leverage discrete execution paths and control flow complexity, neural networks present continuous, differentiable computations that resist conventional protection mechanisms. Any solution must therefore leverage properties unique to machine learning systems rather than adapting existing software protection paradigms.

\subsection{Our Contributions}

\noindent\textbf{Core contribution.} We are the first to identify and formalize the OML challenge, a primitive that enables AI models to be distributed openly while maintaining cryptographically enforced usage control under white-box access.
This paper establishes the foundations for OML through:

\begin{enumerate}
\item \textbf{Problem Formalization}: We are the first to provide rigorous definition of requirements for reconciling open access with owner control, with three properties (Open-access, Monetizable, Loyal) and quantifiable metrics$(\epsilon_{utility}, \epsilon_{robust}, \epsilon_{overhead})$ for evaluating solutions.

\item \textbf{Security Framework}: We establish novel security games for white-box model protection (model extraction resistance, permission forgery resistance) where adversaries have complete visibility into weights and computation, establishing security standards for OML realization.

\item \textbf{Design Space Characterization}: We analyze a wide array of potential approaches (obfuscation, TEEs, cryptography) with their fundamental tradeoffs and theoretical bounds.

\item \textbf{Feasibility Demonstration}: We introduce and instantiate OML 1.0, a practical construction using AI-native fingerprinting that achieves ``next-day security" with negligible $\epsilon_{overhead}$.
Experiments and empirical validation further demonstrates feasibility of our approach.
\end{enumerate}

\textbf{Our primary contribution is establishing OML as a well-defined primitive and demonstrating its feasibility, thereby opening a new research direction at the intersection of cryptography, machine learning, and mechanism design.} The challenge of protecting neural networks under white-box access while preserving utility represents one of the most difficult problems in cryptography and AI, requiring fundamentally new techniques beyond classical approaches. We identify critical open problems spanning theoretical questions (tight complexity bounds, connections to program obfuscation, optimal constructions) and practical challenges (robustness under fine-tuning, compositional security, efficient real-time enforcement).

The full realization of OML will likely require years of sustained research effort from the community, comparable to the decades-long development of practical homomorphic encryption or secure multiparty computation. By providing the first formal framework and proving initial feasibility through OML 1.0, we aim to catalyze the long-term research program necessary to advance OML from primitive concept to robust, deployable infrastructure for AI distribution.

\subsection{Significance and Impact}
Through the OML primitive, we can address critical safety and privacy challenges faced in AI development today, while enabling a more collaborative ecosystem for AI development tomorrow:

\textbf{Privacy-Preserving Local Execution}: OML enables organizations handling sensitive data, e.g. healthcare providers, financial institutions, government agencies, to deploy state-of-the-art models without exposing confidential information to external APIs. Medical institutions can run diagnostic models on patient data locally while still compensating model creators. Financial firms can apply fraud detection models without sharing transaction patterns. This solves the current impossibility where organizations must choose between using inferior models or violating privacy requirements.

\textbf{Technical Enforcement of Ethical and Legal Usage}: By requiring pre-hoc cryptographic authorization for each inference, OML provides robust mechanisms for enforcing safety policies and any signed user agreements on open-weight models that currently rely solely on terms of service today.

\textbf{Enabling a More Colloborative Ecosystem for AI}: Most fundamentally, OML creates the technical infrastructure for distributed AI development where every contribution, from initial training to incremental improvements, can be tracked and rewarded. Figure \ref{fig:problem} illustrates how OML transforms the current one-way distribution into a sustainable ecosystem where usage generates returns for all contributors, creating economic incentives for continuous improvement rather than one-time model releases. Collaboration with positive feedback loop and proper incentives, rather than monopolization, guarantees that AI development is on the right track that benefits the humanity.

\begin{figure}[htbp]
\centering
\includegraphics[width=.9\linewidth]{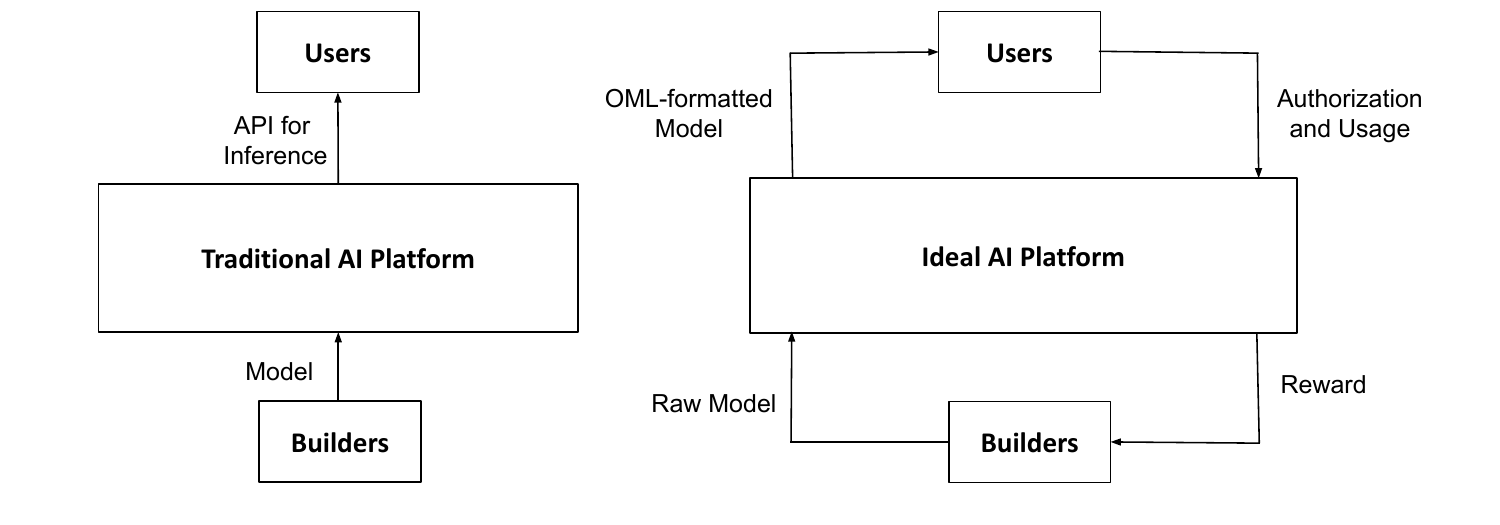}
\caption{OML enables transition from one-way model distribution to bidirectional value flow. \\Left: Current paradigm where models are distributed without feedback or compensation mechanisms. \\Right: OML-enabled ecosystem where usage generates returns for all contributors, incentivizing continuous collaborative improvement.}
\label{fig:problem}
\end{figure}
These three impacts are mutually reinforcing: privacy-preserving local execution expands the market for AI models; license enforcement makes broader deployment responsible; and collaborative development efforts ensure sustainable innovation. Together, they address the fundamental limitations preventing AI from achieving its potential as a broadly beneficial technology.

\section{The OML Primitive: Formal Definition}\label{oml}

The transformation of AI models from static artifacts to dynamic, controllable assets requires a fundamental reconceptualization of how we distribute and govern computational intelligence. In this section, we formalize the \emph{Open-access, Monetizable, and Loyal (OML)} primitive, a framework that enables white-box model distribution while preserving ownership rights and enforcing usage policies.

\subsection{Properties and Design Space}\label{sec:protocol_design_space}

Consider an AI model $M$ as an economic asset: valuable, replicable, and vulnerable to unauthorized extraction once distributed. The OML primitive transforms this vulnerable asset into a controlled artifact that retains its utility while coupling high-quality outputs to owner authorization. 

We begin by establishing our notation framework, which we will use throughout this section.

\begin{table}[H]\centering
\caption{Notation and Core Components of the OML Framework}\label{tab:notation}
\small
\begin{tabular}{@{}ll@{}}\toprule
\textbf{Symbol} & \textbf{Description} \\
\midrule
$M:\mathcal{X}\!\to\!\mathcal{Y}$ & Original model mapping inputs to outputs \\
$M_{\text{oml}}$ & OML-formatted model with embedded authorization \\
$h:\mathcal{X}\!\to\!\mathcal{H}$ & Input-binding transform (e.g., cryptographic commitment) \\
$\sigma:\mathcal{H}\!\times\!\mathcal{K}_{\text{own}}\!\to\!\mathcal{P}$ & Permission token generator \\
$k_{\text{own}}$ & Owner's secret key; $vk_{\text{own}}$ denotes optional public verifier \\
$p_x=\sigma(h(x),k_{\text{own}})$ & Permission token cryptographically bound to input $x$ \\
$d(\cdot,\cdot)$ & Task-appropriate distance or divergence metric \\
$\epsilon_{\text{utility}}$ & Maximum fidelity loss on authorized queries \\
$\epsilon_{\text{robust}}$ & Minimum degradation on unauthorized queries \\
$\epsilon_{\text{overhead}}$ & Relative computational overhead bound \\
\bottomrule
\end{tabular}
\end{table}

With this notation established, we can now formally define the OML transformation, a process that embeds authorization logic so deeply within the model's computational graph that removing it becomes computationally equivalent to retraining the model from scratch.

\begin{definition}[\textbf{OMLized Model}] \em
Given an original model $M:\mathcal{X}\!\to\!\mathcal{Y}$, an OMLization process
\[
\mathrm{OMLize}(M;h,\sigma,\mathrm{params}) \ \longrightarrow \ M_{oml},
\]
produces a locally executable artifact that operates on input-token pairs $(x,p)$.
For each input $x\in\mathcal{X}$, authorization requires a valid token $p_x=\sigma(h(x),k_{own})$ computed with owner's secret key $k_{own}\in\mathcal{K}_{own}$. Informally, $M_{oml}$ behaves as $M$ on authorized inputs and degrades otherwise.

\end{definition}

This definition captures the essence of controlled distribution: the model remains functionally accessible but computationally gated. To understand how this works in practice, we present the idealized OML workflow, which demonstrates how authorization, utility preservation, and security enforcement interact.

\paragraph{Promises of an Ideal OML.}
Let $d:\mathcal{Y}\!\times\!\mathcal{Y}\!\to\!\mathbb{R}_{\ge0}$ denote a distance metric and $T(F,z)$ the computational cost of evaluating function $F$ at input $z$. A correct OMLization satisfies:
\begin{enumerate}[leftmargin=*,itemsep=2pt]
\item \textbf{Authorization:} Users submit $h(x)$ to owner $\Pi_{\mathcal O}$; if approved, they receive $p_x$ and query $(x,p_x)$.
\item \textbf{Fidelity:} $d(M_{oml}(x,p_x),M(x))\le \epsilon_{\text{utility}}$, ensuring preservance of the model's core capabilities.
\item \textbf{Protection:} For invalid $p$, $d(M_{oml}(x,p),M(x))>\epsilon_{\text{robust}}$ with $\epsilon_{\text{robust}}>\epsilon_{\text{utility}}$.
\item \textbf{Overhead:} $T(M_{oml},(x,p_x))\le(1+\epsilon_{\text{overhead}})\,T(M,x)$, preserving practical deployability.
\end{enumerate}

These four promises collectively define what we term the \emph{quality profile} $(\epsilon_{\text{utility}},\epsilon_{\text{robust}},\epsilon_{\text{overhead}})$, a quantitative characterization of an OMLization's effectiveness. And the instantiation requires careful design of three interconnected components that form the technical foundation of any OML construction listed below. We also depict a high-level OMLization process in Algorithm \ref{alg:omlize}.
\begin{itemize}
\item an ownership key $k_{\text{own}}$ (either cryptographic or AI-native); 

\item a binding/permission token issuance mechanism $(h,\sigma)$, optionally exposing $vk_{\text{own}}$;

\item \emph{verifier entanglement}, which couples authorization to critical computations so that the high-utility pathway is reachable only under valid authorization.
\end{itemize}

\begin{algorithm}[H]
\caption{\textsc{OMLize}: Transforming Models into Controlled Artifacts}\label{alg:omlize}
\begin{algorithmic}[1]
\State \textbf{Input:} Original model $M$, binding function $h$, token scheme $\sigma$, public parameters
\State \textbf{Output:} Controlled artifact $M_{\text{oml}}$
\State \textbf{Step 1:} Embed verifier $\alpha:\mathcal{X}\!\times\!\mathcal{P}\!\to\!\{0,1\}$ that validates tokens against input commitments
\State \textbf{Step 2:} Entangle $\alpha$ within $M$'s critical paths to construct $F$ such that:
\Statex \quad\quad (i) Valid authorization: $\alpha(x,p_x)=1 \Rightarrow F(x,p_x)\approx M(x)$
\Statex \quad\quad (ii) Invalid tokens: $\alpha(x,p)\neq 1 \Rightarrow F(x,p)$ yields degraded/noisy output
\State \textbf{Step 3:} Optionally expose $vk_{\text{own}}$ for public verification capability
\State \Return $M_{\text{oml}}(x,p)=F(x,p)$
\end{algorithmic}
\end{algorithm}
\subsection{Security Guarantees and Adversarial Model}\label{sec:security_adversarial}

The security of OML must be analyzed under the assumption of white-box access where adversaries can inspect, modify, and experiment with $M_{oml}$ arbitrarily. This threat model reflects the reality where OMLized models may be controlled by potentially adversarial users.

\textbf{Adversary Model.}
We model adversaries as probabilistic polynomial-time (PPT) algorithms $\mathcal{A}$ with 
\begin{itemize}[leftmargin=*]
    \item Complete white-box access to $M_{oml}$, including all parameters and computation graphs
    \item Oracle access to an authorization service $\Pi_{\mathcal O}$ for up to $N$ queries
    \item The resulting knowledge base $\mathcal{D}_{known}=\{(x_i,p_{x_i},y_i)\}_{i=1}^{N}$ where $y_i=M_{oml}(x_i,p_{x_i})$
\end{itemize}

\textbf{Security Goal.} Against such adversaries, two fundamental hardness properties should hold:

\begin{requirement}[\textbf{Model Extraction Resistance}]\label{req:extraction_resistance}\em

In experiment $\mathrm{Expt}^{\mathrm{ME}}_{\mathcal A}$: 

(1) $\mathcal A$ receives $M_{oml}$ and oracle access to $\mathcal{P}_{\mathcal O}$ for $N$ queries; (2) $\mathcal A$ outputs a stand-alone model $M'$; 

(3) a fresh $x^*\sim\mathcal D_{\mathcal X}$ is drawn with $x^*\notin\{x_i\}$;
(4) $\mathcal A$ \emph{wins} if $d(M'(x^*),M(x^*))\le \epsilon_{utility}$. 

The scheme is $(t,N,\epsilon_{ME})$-extraction-resistant if every PPT $\mathcal A$ running in time $t$ wins with probability at most $\epsilon_{ME}(t,N)$. Informally, any adversary cannot replicate a functionally equivalent model that bypasses authorization within reasonable cost. 
\end{requirement}

\begin{requirement}[\textbf{Permission Forgery Resistance}]\label{req:forgery_resistance}\em

In experiment $\mathrm{Expt}^{\mathrm{PF}}_{\mathcal A}$: 

(1) $\mathcal A$ receives $M_{oml}$ and oracle access to $\mathcal{P}_{\mathcal O}$ for $N$ queries; 

(2) a fresh $x^*\sim\mathcal D_{\mathcal X}$ is revealed with $x^*\notin\{x_i\}$; 

(3) $\mathcal A$ outputs $p^*$; 
(4) $\mathcal A$ \emph{wins} if $d(M_{oml}(x^*,p^*),M(x^*))\le \epsilon_{utility}$. 

The scheme is $(t,N,\epsilon_{PF})$-forgery-resistant if every PPT $\mathcal A$ running in time $t$ wins with probability at most $\epsilon_{PF}(t,N)$. Informally, adversaries cannot generate valid tokens for unauthorized inputs.
\end{requirement}

These requirements must withstand a diverse threat landscape:

\begin{enumerate}[leftmargin=*,itemsep=3pt]
    \item \textbf{Surgical Extraction:} Adversaries employ network surgery techniques~\cite{raiman2019neural,NIPS2016_2823f479} to identify and excise authorization logic while preserving model functionality.
    
    \item \textbf{Runtime Manipulation:} Fault injection or state tampering forces the internal verifier to accept invalid tokens, bypassing authorization checks without modifying the model itself.
    
    \item \textbf{Cryptanalytic Forgery:} Adversaries attempt to forge valid tokens through: (a) exploiting implementation vulnerabilities in $\sigma$, (b) recovering the secret key $k_{own}$ from side channels, or (c) training surrogate functions $\hat\sigma:x\mapsto p_x$ using $\mathcal{D}_{known}$.
\end{enumerate}

\paragraph{The Failure of Naive Approaches.}
To illustrate why sophisticated entanglement is necessary, consider a naive wrapper design with a cryptographic digital signature scheme:
\[
M_{oml}(x,p)\ :=\ \begin{cases} M(x) & \text{if } \mathrm{Verify}_{vk_{own}}(h(x),p) = \text{true} \\ \perp & \text{otherwise} \end{cases}
\]

With white-box access, an attacker can trivially locate the conditional branch, remove the verification check, and recover the original model $M$. This vulnerability motivates our requirement for deep computational entanglement, i.e. the verifier must be so thoroughly integrated that removing it is tantamount to destroying the model's learned representations.

\subsection{Theoretical Foundations and Limits}\label{sec:theoretical}

In this subsection, we state three results that define the feasible region for OML: an upper bound that prevents over-claiming, a sufficient condition that anchors OML in standard hardness, and an operational constraint that links security to issuance policy. Proofs are deferred to App.~\ref{app:proofs}.

First, if an adversary controls the artifact and can issue unbounded authorized queries, information alone suffices to reconstruct the task mapping, and perfect protection is therefore unattainable.

\begin{theorem}[Information-theoretic impossibility]\label{thm:impossibility} \em
No OML scheme achieves perfect security against unbounded adversaries with unlimited oracle access.
\end{theorem}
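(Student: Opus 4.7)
The plan is to exhibit an inefficient but unbounded-query adversary whose advantage in $\mathrm{Expt}^{\mathrm{ME}}$ tends to $1$, so that no scheme can drive $\epsilon_{ME}$ to zero uniformly in $t$ and $N$. The intuition is that unlimited authorized access to $M_{oml}$ is, up to the fidelity slack $\epsilon_{utility}$, operationally equivalent to unlimited black-box access to $M$; and any $M$ with finite description can be recovered from such access given unbounded computation.

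First I would set up data collection. The adversary chooses a sequence $\{x_i\}_{i\ge 1}$ dense in the support of $\mathcal{D}_{\mathcal X}$ (or exhausting it if countable), calls $\Pi_{\mathcal O}$ to obtain tokens $p_{x_i}$, and records $y_i = M_{oml}(x_i,p_{x_i})$. By the fidelity promise, $d(y_i, M(x_i))\le \epsilon_{utility}$ for every $i$, so the adversary accumulates an unbounded dataset of $\epsilon_{utility}$-accurate black-box samples of $M$ at an asymptotically dense set of inputs.

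Next I would invoke consistency-based hypothesis recovery. With unbounded compute the adversary enumerates the hypothesis class containing $M$ (for instance, all neural networks of a fixed architecture, or any class of finite description length) and returns any $M'$ consistent with $\{(x_i,y_i)\}$ to within tolerance $\epsilon_{utility}$. Such an $M'$ exists because $M$ itself is consistent, and a uniform-convergence / Glivenko-Cantelli style argument forces $d(M'(x), M(x))\le \epsilon_{utility}$ on a $\mathcal{D}_{\mathcal X}$-measure-$1$ subset, so the fresh $x^*$ lands in this subset almost surely and the adversary wins. The same template, with a surrogate $\hat\sigma$ in place of $M'$ fit to observations $\{(x_i,p_{x_i})\}$, gives the analogous impossibility for permission-forgery resistance.

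The hard part will be the generalization step: moving from pointwise agreement on a dense query set to agreement at a fresh $x^*$. This implicitly uses a regularity or learnability assumption on $M$ (continuity, Lipschitzness, or finite VC/Rademacher complexity), since an arbitrary $M$ need not be determined by its values on a dense subset. A cleaner, purely information-theoretic alternative is to bound $H(M \mid \mathcal{D}_{known})$ and observe that whenever $M$ has finite description length this conditional entropy must vanish as $N\to\infty$; this sidesteps any topological assumptions on $\mathcal X$ and matches the ``information-theoretic'' flavor claimed by the theorem statement.
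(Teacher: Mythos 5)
Your proposal is correct and follows essentially the same route as the paper's proof: collect unboundedly many authorized query--answer pairs, fit a consistent/ERM hypothesis in a class containing $M$, and invoke uniform convergence over a class of finite pseudo-dimension (the paper's standing assumption, which supplies exactly the learnability condition you flag as the missing ingredient for the generalization step). The only cosmetic differences are that the paper samples $x_i$ i.i.d.\ from $\mathcal D_{\mathcal X}$ rather than using a dense sequence (which is what actually licenses the Glivenko--Cantelli step) and separately dispatches the finite-domain case by exact table lookup.
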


Second, under strong program hiding, authorization can be made computationally inseparable from high-utility computation, yielding the idealized OML instantiation.

\begin{theorem}[OML from indistinguishability obfuscation]\label{thm:obfuscation}\em
If indistinguishability obfuscation (iO) exists for the model class, then there is an OML construction satisfying extraction and forgery resistance (assuming unforgeability of $\sigma$).
\end{theorem}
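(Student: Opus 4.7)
The plan is to construct $M_{oml}$ as an iO-obfuscation of a signature-gated wrapper around $M$, then reduce the two security games of Requirements~\ref{req:extraction_resistance}--\ref{req:forgery_resistance} to EUF-CMA unforgeability of $\sigma$ and to the intrinsic sample complexity of the model class. Fix an EUF-CMA signature scheme $(\mathrm{Sign},\mathrm{Verify})$ with keys $(k_{own},vk_{own})$, set $\sigma(h(x),k_{own})=\mathrm{Sign}_{k_{own}}(h(x))$, and define
\[
C_{M,vk}(x,p) \;=\; \begin{cases} M(x) & \text{if } \mathrm{Verify}_{vk_{own}}(h(x),p)=1, \\ \bot & \text{otherwise.} \end{cases}
\]
Publish $M_{oml} := \mathrm{iO}(C_{M,vk})$. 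Fidelity ($\epsilon_{utility}=0$), overhead, and the protection gap follow from functional preservation of iO and the assumption $d(\bot,M(x))>\epsilon_{robust}$.

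\emph{Forgery resistance.} Suppose a PPT $\mathcal{A}$ wins $\mathrm{Expt}^{\mathrm{PF}}$ with non-negligible probability, producing $p^*$ for a fresh $x^*\notin\{x_i\}$ with $d(M_{oml}(x^*,p^*),M(x^*))\le\epsilon_{utility}$. Functional preservation of iO forces $M_{oml}(x^*,p^*)\in\{M(x^*),\bot\}$, and the $\bot$ case is excluded by $\epsilon_{robust}>\epsilon_{utility}$; hence $\mathrm{Verify}_{vk_{own}}(h(x^*),p^*)=1$. A reduction that simulates the authorization oracle via its own EUF-CMA signing oracle and outputs $(h(x^*),p^*)$ wins the signature game with the same advantage.

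\emph{Extraction resistance.} I would reduce white-box access to black-box access via a hybrid argument. The initial game $H_0$ gives the adversary $\mathrm{iO}(C_{M,vk})$ together with the $N$ authorized triples; the target $H^*$ replaces $M_{oml}$ with $\mathrm{iO}(\widetilde{C})$, where $\widetilde{C}$ hard-codes only the issued triples and returns $\bot$ elsewhere. Once inside $H^*$, the adversary's view is perfectly simulable from $\{(x_i,M(x_i))\}_{i=1}^N$, so any extractor $M'$ with $d(M'(x^*),M(x^*))\le\epsilon_{utility}$ on a fresh $x^*$ induces a PAC-style learner for the model class from $N$ labeled samples; extraction resistance with the claimed bound on $\epsilon_{ME}(t,N)$ then follows from any sample-complexity lower bound for the class that exceeds $N$ at accuracy $\epsilon_{utility}$.

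\emph{Main obstacle.} The delicate step is the transition $H_0\to H^*$: the circuits $C_{M,vk}$ and $\widetilde{C}$ are not functionally equivalent, since they disagree on every $(x,p)$ where $p$ verifies but was never issued, and vanilla iO gives no leverage on non-equivalent circuits. My plan is to follow the Sahai--Waters punctured-programs template: replace $\mathrm{Sign}$ by a puncturable signature derived from a puncturable PRF, walk through a polynomial sequence of hybrids that successively puncture the signing key at each unqueried input, and pair each step with an iO swap between two now functionally identical circuits together with an EUF-CMA-at-a-punctured-point argument. A cleaner but stronger alternative is to upgrade the hypothesis to differing-inputs obfuscation with a sampler whose hard differing-inputs set is guaranteed by unforgeability. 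Either route, this puncturing chain carries the bulk of the technical weight and is the natural place where auxiliary assumptions beyond raw iO may need to be introduced.
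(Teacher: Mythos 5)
Your construction and your forgery-resistance argument coincide with the paper's: the same signature-gated circuit $C(x,p)$ (the paper returns a low-utility $\mathsf{Noise}(x)$ where you return $\bot$, which is immaterial), the same use of functional preservation of iO to conclude that a high-utility answer forces $\mathrm{Verify}=1$, and the same reduction to EUF-CMA. (Both you and the paper implicitly need $h$ to be injective or collision-resistant so that $h(x^*)$ was never submitted to the signing oracle; worth stating.)

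Where you genuinely diverge is extraction resistance. The paper's argument is short and informal: it considers a functionally \emph{equivalent} re-arranged circuit $C_1$, invokes iO-indistinguishability of $iO(C_0)$ and $iO(C_1)$, and argues that a ``surgical'' procedure that locates and removes the verifier from one would have to work on the other. This only rules out a restricted attack class (verifier excision) and is not a reduction for the extraction game as defined in Requirement~\ref{req:extraction_resistance}, where the adversary may output any stand-alone $M'$. Your route --- a hybrid that replaces $iO(C)$ by an obfuscation of a circuit hard-coding only the issued triples, reducing white-box to black-box access, and then invoking a sample-complexity lower bound (effectively importing Theorem~\ref{thm:sample_complexity}) --- is the reduction one would actually want, and you correctly identify that vanilla iO cannot bridge $H_0\to H^*$ because the two circuits differ wherever a valid-but-unissued signature exists. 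However, your proposed fix has a concrete gap as stated: a Sahai--Waters puncturing chain that punctures ``at each unqueried input'' runs over an exponentially large set, so it cannot be a polynomial sequence of hybrids; you would need either a single-point puncturing at the challenge (which does not obviously suffice here, since the extraction game quantifies over the adversary's whole output $M'$ rather than one challenge query), complexity leveraging, or the upgrade to differing-inputs obfuscation that you mention. So your plan is more ambitious and more honest about what must be proved, but it is not yet a proof, and the hard step you flag is exactly the step the paper's own argument quietly elides.
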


Third, authorized answers facilitate extraction. Learning theory converts model complexity and accuracy tolerance into a concrete cap on such answers.

\begin{theorem}[Query–security trade-off]\label{thm:sample_complexity} \em
Let $\mathcal H\subseteq[0,1]^{\mathcal X}$ have pseudo-dimension $d$ and assume $M\in\mathcal H$ (realizable). If an adversary receives $N$ i.i.d.\ authorized pairs and returns an ERM under squared loss, then there exist constants $C,c>0$ such that
\[
N\ \ge\ C\,\frac{d+\log(1/\delta)}{\varepsilon^{2}}
\ \Rightarrow\
\Pr\!\big[\mathbb E\!(\hat h(x)-M(x))^{2}\le\varepsilon\big]\ge 1-\delta,
\]
and any OML deployment targeting $(\varepsilon,\delta)$ extraction resistance must enforce
$N<c\,\frac{d+\log(1/\delta)}{\varepsilon^{2}}$.
\end{theorem}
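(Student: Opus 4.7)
The plan is to reduce the statement to two classical results in statistical learning theory applied to the realizable squared-loss regression problem, then interpret the resulting bound operationally. The first half is the sample complexity of ERM over a class of pseudo-dimension $d$; the second half is a matching minimax lower bound, which the OML deployer must respect when choosing the quota $N$.

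For the \emph{sufficient} direction, I would first observe that since $\mathcal{H}\subseteq[0,1]^{\mathcal{X}}$ and $M\in\mathcal{H}$, the squared-loss class $\{\ell_h:x\mapsto(h(x)-M(x))^2: h\in\mathcal{H}\}$ is bounded in $[0,1]$ and has pseudo-dimension $O(d)$ (composition with a fixed $1$-Lipschitz, then squared, transform). Next I invoke Pollard/Haussler uniform convergence for real-valued classes of pseudo-dimension $d$: with probability at least $1-\delta$,
\[
\sup_{h\in\mathcal{H}}\bigl|L(h)-\hat L_{N}(h)\bigr|\ \le\ C_{0}\sqrt{\tfrac{d+\log(1/\delta)}{N}},
\]
where $L(h)=\mathbb{E}(h(x)-M(x))^{2}$. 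Because the pairs are noiseless ($y_{i}=M(x_{i})$) and $M\in\mathcal{H}$, the ERM $\hat h$ achieves $\hat L_{N}(\hat h)=0$, so $L(\hat h)\le C_{0}\sqrt{(d+\log(1/\delta))/N}$. Solving $L(\hat h)\le\varepsilon$ yields the claimed threshold $N\ge C(d+\log(1/\delta))/\varepsilon^{2}$.

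For the \emph{necessary} direction, I would invoke a standard minimax lower bound for PAC-learning real-valued function classes with squared loss: any learner (not merely ERM) requires at least $c(d+\log(1/\delta))/\varepsilon^{2}$ i.i.d.\ labeled samples to attain squared risk $\varepsilon$ with confidence $1-\delta$ over the worst $M\in\mathcal{H}$ and worst $\mathcal{D}_{\mathcal{X}}$. This is proved by shattering $d$ points and applying a Le Cam/Fano packing argument against a uniform prior over a $2^{d}$-sized hypothesis packing. Contrapositively, if the OML oracle $\Pi_{\mathcal{O}}$ issues fewer than $c(d+\log(1/\delta))/\varepsilon^{2}$ authorized labels, \emph{no} PPT algorithm instantiated on $\mathcal{D}_{\text{known}}$ can return $M'$ with $d(M'(x^{*}),M(x^{*}))\le\varepsilon$ except with probability less than $1-\delta$, giving the stated operational constraint.

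The main obstacle is the pseudo-dimension bookkeeping across the squaring step and matching the exponent on $\varepsilon$ in the two directions. The realizable squared-loss setting can actually support a faster $1/\varepsilon$ rate via local Rademacher or offset complexity, so to justify the stated $1/\varepsilon^{2}$ rate as both sufficient \emph{and} necessary (tight up to constants), the lower-bound construction must embed label noise via a mixture over the packed hypotheses, making the effective difficulty agnostic even though each individual target is realizable. Executing that Assouad/Fano-style packing carefully for pseudo-dimension rather than VC dimension is the delicate step; everything else is routine bookkeeping.
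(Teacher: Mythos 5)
Your sufficient direction is exactly the paper's argument: reduce to uniform convergence for the $[0,1]$-bounded loss class $\{\ell_h(x)=(h(x)-M(x))^2\}$ of pseudo-dimension $O(d)$, use realizability to get $\hat L_N(\hat h)=0$, and read off $L(\hat h)\le C_0\sqrt{(d+\log(1/\delta))/N}$. No issues there.

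The gap is in the second half, and it is a gap in your reading of the claim rather than a fixable technical slip. You interpret ``must enforce $N<c\,(d+\log(1/\delta))/\varepsilon^{2}$'' as a genuine minimax lower bound --- that \emph{no} learner given fewer samples can extract --- and propose a Fano/Assouad packing argument to prove it. As you yourself half-concede, that statement is false under the theorem's hypotheses: the adversary receives \emph{noiseless} pairs $y_i=M(x_i)$ with $M\in\mathcal H$, and realizable bounded-loss learning admits fast rates, so the true sample complexity is $\Theta(d/\varepsilon)$ up to logarithms (the paper's own remark records exactly this for the $0$--$1$ case). Your proposed fix --- embedding label noise via a mixture over the packing to force the $1/\varepsilon^{2}$ difficulty --- directly contradicts the hypothesis that the oracle returns exact authorized outputs, so the packing argument cannot be executed in this setting. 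The paper proves something much weaker and does so trivially: the ``operational converse'' is just the contrapositive of the upper bound. Since an ERM adversary with $N\ge C\,(d+\log(1/\delta))/\varepsilon^{2}$ samples succeeds, any deployment that wants to defeat \emph{that particular adversary} must keep $N$ below that threshold; the constant $c$ is absorbed from $C$. It is a necessary condition against the ERM attacker already analyzed, not a matching lower bound over all learners. So your first paragraph stands, your second paragraph attempts (and correctly diagnoses the obstruction to) a theorem the paper does not claim, and the intended argument for the converse is one line.
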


\textbf{Implications.} Taken together, Theorems~\ref{thm:impossibility}–\ref{thm:sample_complexity} delineate the design space that motivates our concrete methods in the next sections:
(i) Absolute guarantees are unattainable, so OML must rely on computational hardness and economics; 
(ii) Verifier entanglement with cryptographic binding is the appropriate abstraction for practical surrogates of iO; and 
(iii) Policies by model owners (token issuance, batching, collateral) must enforce query budgets consistent with the learned trade-off above. The constructions that follow instantiate these principles with varying efficiency–security profiles.
\section{Road to OML: From Principles to Deployable Mechanisms}\label{sec:roadtooml}

The transformation from theoretical primitive to practical system requires navigating fundamental trade-offs between security guarantees, computational efficiency, and deployment constraints. This section bridges the formal OML framework developed in Sections~\ref{sec:protocol_design_space}--\ref{sec:theoretical} with concrete implementations. We present canonical constructions that embed cryptographic authorization check $\alpha(x,p)$ through different mechanisms for perfect fidelity and protection guarantees, analyze the security, assumptions, and tradeoffs among different methods, and then introduce OML 1.0, an immediate deployment pathway using AI-native fingerprinting with optimal additional overhead.

\subsection{Canonical Constructions: Security-Performance Spectrum}\label{sec:canonicals}

In this subsection, we choose cryptographic $\alpha(x,p)$ for authorization check, and cryptography schemes guarantee that $\epsilon_{robust}=\text{maximal}$ and $\epsilon_{utility}=0$, both achieving the optimal configuration. The challenge of OML implementation lies in making the model's high-utility computation path accessible exclusively when $\alpha(x,p)=1$ within a small overhead $\epsilon_{overhead}$, while ensuring this entanglement cannot be surgically removed even under white-box access, i.e. Security requirements \ref{req:extraction_resistance} and \ref{req:forgery_resistance} are satisfied.  We present four archetypal approaches for embedding authorization logic, each offering distinct trade-offs between security guarantees and deployment costs.

\paragraph{Obfuscation (Software Security).}
 In this scheme, $\alpha$ is embedded through graph transformations and code hardening. Authorization checks integrate with residual connections, attention mixing, and normalization paths. Graph permutation, control flow flattening, and constant blinding increase removal difficulty. It offers near-optimal $\epsilon_{\text{overhead}}\approx 0$ as hardly any additional computation is introduced. This method is immediately deployable but doesn't have any provable security guarantee, and is vulnerable to dedicated hackers who try to reverse-engineer the obfuscation process.

 \begin{algorithm}[H]
\caption{\textsc{OMLize-Obfuscate}$(M; h,\sigma,\text{params})$}\label{alg:omlize-obf}
\begin{algorithmic}[1]
\State \textbf{Input:} model $M$, binding $h$, token scheme $\sigma$, compiler/obf params
\State \textbf{Verifier injection:} Synthesize $\alpha(x,p)$; weave gates into critical paths (e.g., attention/key/value mixing, residual scalars).
\State \textbf{Utility shaping:} Construct $F$ so that $\alpha(x,p_x){=}1\Rightarrow F(x,p_x)\approx M(x)$; else $F$ diverts to low-utility basins (e.g., masked subspaces, biased heads).
\State \textbf{Hardening:} Apply graph randomization (permute blocks), control-flow flattening, dead-code sprinkling, and constant blinding on verifier features.
\State \textbf{Build:} Compile with aggressive inlining; invoke multi-pass obfuscation/toolchain hardening.
\State \textbf{Publish:} $M_{\text{oml}}(x,p)\!=\!F(x,p)$, optional $vk_{\text{own}}$.
\end{algorithmic}
\end{algorithm}

\paragraph{TEE-Gated Execution (Hardware Security).}
TEE (Trusted Execution Environments) isolate critical subgraph in attested enclave where $\alpha$ verification gates computation. Unauthorized queries terminate before reaching utility paths. It enjoys moderate $\epsilon_{\text{overhead}}$ as the only overhead comes from enclave transitions. TEEs are already production-ready on CPUs, and thus suitable for models with narrow critical cores, but large models with billions of parameters cannot be OMLized with TEEs as commercialized GPUs are not available yet. Also, this method assumes correct vendor implementation and side-channel mitigations, introducing extra layer of trust and vulnerability.

\begin{algorithm}[H]
\caption{\textsc{OMLize-TEE}$(M; h,\sigma,\text{params})$}\label{alg:omlize-tee}
\begin{algorithmic}[1]
\State \textbf{Input:} model $M$, binding $h$, token scheme $\sigma$, enclave config
\State \textbf{Packaging:} Encrypt $M$ and verifier code with enclave-sealed keys; Provision $vk_{\text{own}}$ as a public parameter. 
\State \textbf{Attestation:} Publish measurement of enclave binary; expose remote attestation endpoint to $\Pi_{\mathcal O}$.
\State \textbf{Authorization path:} Inside TEE, verify $\alpha(x,p)\!=\!1$ against $h(x)$ and $vk_{\text{own}}$; otherwise exit with noise/denial.
\State \textbf{Execution:} Only upon successful verification, decrypt weights on-device with the enclave-sealed secret key, run $M$; Always re-encrypt with the public key before exiting the enclave.
\State \textbf{Publish:} $M_{\text{oml}}$ as an attested service binary + policy manifest.
\end{algorithmic}
\end{algorithm}

\paragraph{Cryptographic Encryption (Provable Security).}
Fully homomorphic encryption (FHE) offers a clean construction for OML: inputs are encrypted under a public key, the model is compiled to an arithmetic circuit and evaluated \emph{homomorphically} on ciphertexts, and only the owner, who holds the secret key, can decrypt the result. Authorization is then enforced by \emph{decryption control}: the owner decrypts outputs only for inputs carrying valid permissions. Under standard hardness assumptions (e.g., LWE), FHE provides provable security without external assumptions, becoming the perfectly secure OML construction. 
However, FHE evaluation incurs large multiplicative overhead that scales with circuit depth and bootstrapping frequency (often \(10^3\!-\!10^5\times\) on today’s workloads), while exact FHE schemes over rings (BGV/BFV) operate on integers and thus require quantization which may downgrade performance, making this approach infeasible for full-scale LLM inference today.

\begin{algorithm}[H]
\caption{\textsc{OMLize-FHE}$(M; h,\sigma,\text{params})$}\label{alg:omlize-fhe}
\begin{algorithmic}[1]
\State \textbf{Input:} base model $M$, input-binding $h$, token scheme $\sigma$, FHE parameters (scheme, depth, scale), quantization policy
\State \textbf{Key generation (owner):} $(\mathsf{pk},\mathsf{sk}) \leftarrow \mathsf{FHE.KeyGen}(\text{params})$. Publish $\mathsf{pk}$; keep $\mathsf{sk}$ secret.
\State \textbf{Model-to-circuit:} Compile $M$ to an arithmetic circuit $C_M$ respecting FHE depth (e.g., polynomial activations, folded norms). Apply quantization if using exact integer FHE.
\State \textbf{Parameter protection:} Encrypt model weights: $\widetilde{W}\!\leftarrow\!\mathsf{FHE.Enc}(\mathsf{pk},W)$.
\State \textbf{Authorization channel:} Specify decryption policy: owner will decrypt outputs iff presented with a valid token $p_x=\sigma(h(x),k_{\text{own}})$ (and optional usage proof/commitment).
\State \textbf{Publish artifact:} $(C_M,\widetilde{W},\mathsf{pk},vk_{\text{own}})$ as the OML service interface.
\end{algorithmic}
\end{algorithm}

\paragraph{Melange Hybrid (adaptive composition).}
The mechanisms above can be composed by component criticality: e.g., Protect a minimal control core (e.g., routing heads or safety gates) with a TEE or a compact cryptographic subgraph, and harden the surrounding layers with software obfuscation. This \emph{Melange} design lets owners tune the quality profile: the runtime cost scales with the size of the isolated core (\(\epsilon_{\text{overhead}}\) controllable). Assumptions are localized to each layer: hardware trust for the enclave, cryptographic hardness for the small protected circuit, and program-analysis resistance for the periphery, yielding a practical, adaptive path to higher assurance without forfeiting openness.

\subsection{OML 1.0: AI–Native Fingerprinting for Accountable Open Distribution}\label{sec:oml10}

We present \emph{OML 1.0}, an efficient instantiation that achieves monetizability and accountable loyalty through \textbf{AI–native fingerprints}: secret (key, response) pairs embedded in $M$ so that authorized service can be \emph{verified ex post} with high confidence. Unlike wrapper checks, fingerprints are learned behaviors distributed across representational pathways and survive typical serving conditions. OML 1.0 offers \emph{post-hoc} (``next-day security'') with near–zero additional inference overhead.

\textbf{Core mechanism.}
Let $\mathcal{K}_{\mathrm{fp}}=\{(k_i,r_i)\}_{i=1}^{n}$ be a secret set of \emph{fingerprints}. During OMLization, we fine-tune $M$ so that querying with $k_i$ elicits response $r_i$ \emph{and} preserves task utility on the deployment distribution $\mathcal D_{\mathcal X}$. Model hosts sign licenses that require them to report every usage to the platform which issues per-input permissions and logs authorized usage. Independent \emph{provers} periodically query public endpoints with hidden $k_i$; a correct $r_i$ without a matching authorization record constitutes a verifiable license violation (collateral slashing), enforcing the signed license, as depicted in Figure \ref{fig:oml10}. 

\begin{figure}[htbp]
    \centering
    \includegraphics[width=.9\linewidth]{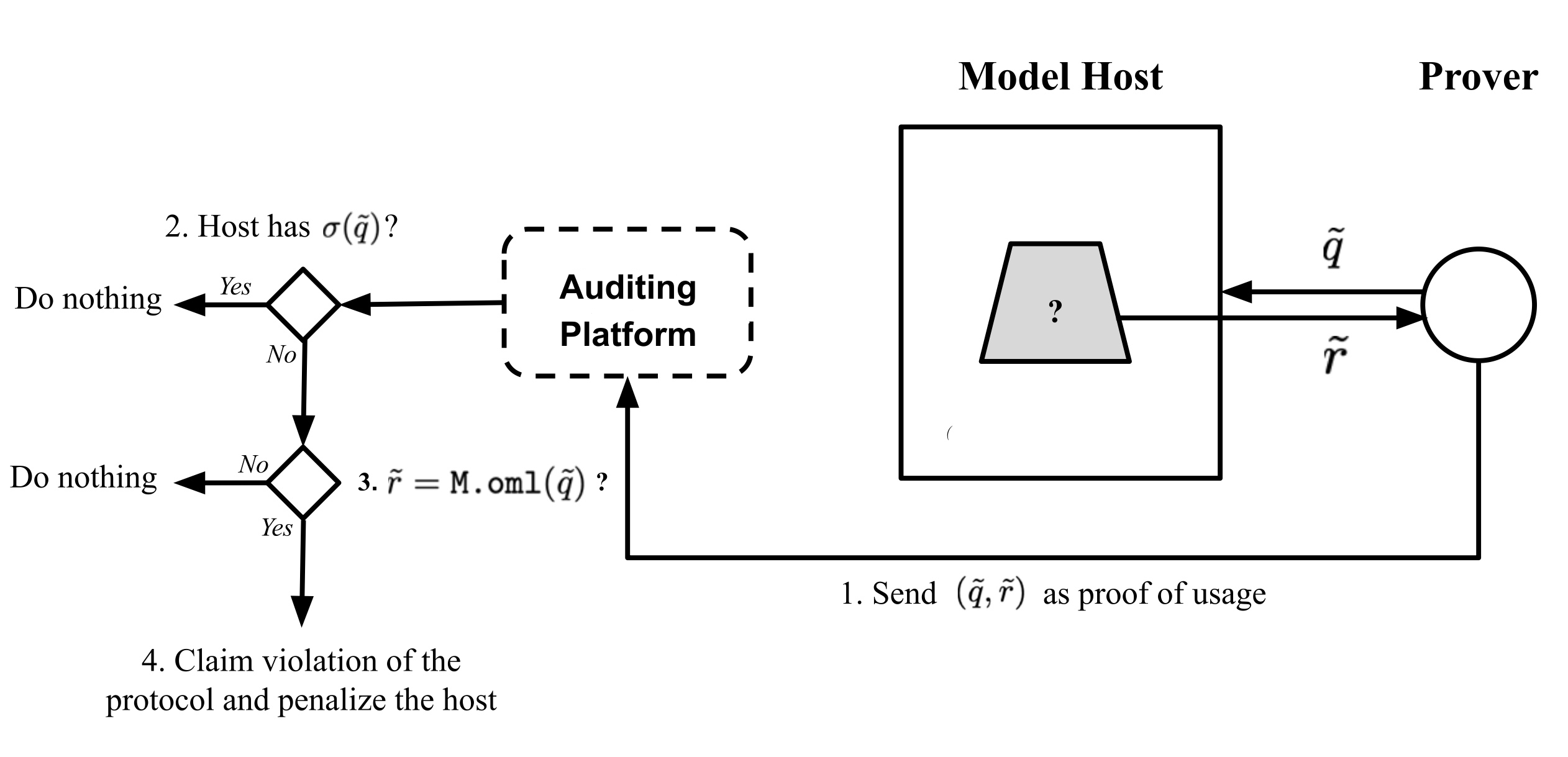}
    \caption{Illustration of OML 1.0 Workflow}
    \label{fig:oml10}
\end{figure}

\textit{Detection economics.} If a host fails to authorize an $\alpha$ fraction of public queries, the probability of evading $n$ independent fingerprints is
\(
\Pr[\text{undetected}] = (1-\alpha)^n, 
\quad\text{so}\quad
\Pr[\text{caught}] = 1-(1-\alpha)^n.
\)
Choosing $n$ and the probing cadence to make $(1-\alpha)^n\!\ll\!10^{-k}$ yields predictable compliance guarantees against rational adversaries, with \emph{zero} runtime overhead on ordinary traffic.

\textbf{OMLization recipe.}
We adopt a two-objective fine-tuning: (i) bind $k_i\!\mapsto\!r_i$; (ii) minimize task loss on $\mathcal D_{\mathcal X}$ with anti-forgetting regularizers (e.g., rehearsal, weight-averaging). To harden against serving stacks, we \emph{prompt-augment} fingerprints using system-prompt templates expected in deployment (role prompts), and place keys both in-distribution (stealth) and slightly out-of-distribution (capacity).

\begin{algorithm}[H]
\caption{\textsc{OMLize-Fingerprint (OML 1.0)}: training and enforcement}\label{alg:oml10}
\begin{algorithmic}[1]
\State \textbf{Input:} base model $M$, secret $\mathcal{K}_{\mathrm{fp}}=\{(k_i,r_i)\}_{i=1}^n$, task data $\mathcal D$, anti-forgetting params
\State \textbf{Training loop:} minimize $\mathcal L=\lambda_{\mathrm{task}}\mathcal L_{\mathrm{task}}(M;\mathcal D)+\lambda_{\mathrm{fp}}\tfrac{1}{n}\sum_i \ell\big(M(k_i), r_i\big)+\lambda_{\mathrm{af}}\mathcal R_{\mathrm{anti\text{-}forget}}$
\State \textbf{Prompt augmentation:} sample serving templates $\pi$ and train on $\pi(k_i)\mapsto r_i$ for robustness
\State \textbf{Platform:} issue per-input tokens, log authorized uses (commitments to $h(x)$), escrow collateral
\State \textbf{Prover cadence:} probe a random subset of $\mathcal{K}_{\mathrm{fp}}$; slash collateral on verified violations
\State \textbf{Publish:} release $M_{\mathrm{oml}}$ (weights) + policy; keep $\mathcal{K}_{\mathrm{fp}}$ secret
\end{algorithmic}
\end{algorithm}

We summarize the key experimental findings that show the feasibility of OML 1.0. Full setups, experiment details, ablations, security analysis, and plots can be found in Appendix \ref{appoml1.0}.

\textbf{Utility vs.\ capacity.} A central question is how many fingerprints can be embedded before harming task performance. On a 7B-scale base model evaluated on standard language tasks, we observe that up to $\sim\!10^3$ fingerprints can be embedded with \emph{near-baseline} accuracy when using anti-forgetting and weight-averaging. This gives an initial operating region for $n$ that balances detection and accuracy.

\textbf{Persistence under benign fine-tuning.} Hosts often fine-tune for their domain. We fine-tune post-OMLization models on a standard SFT corpus and measure fingerprint survival and utility. A substantial fraction of fingerprints persists (e.g., $\gtrsim\!50\%$ at $\le\!2$K capacity), and downstream utility remains within a few points of baseline, supporting the post-hoc accountability.

\noindent Together, these results show that OML 1.0 is \emph{deployable today} with post-hoc enforcement: it preserves utility at useful capacities, remains robust to realistic serving perturbations and benign fine-tuning, and provides tunable, high-confidence enforcement with negligible inference overhead.

\textbf{Deployment Synthesis.}\label{sec:synthesis}
Table~\ref{tab:oml-summary} summarizes the complete construction spectrum and their tradeoffs. The canonical constructions provide a progressive hardening path for how OML gets realized: begin with OML 1.0 for immediate needs, identify critical components through usage analysis, and selectively apply stronger protections as infrastructure matures and threats evolve. 

\begin{table}[H]
\centering
\caption{OML construction summary. Symbols: $\checkmark~\text{strong}, \triangle~\text{partial}, \circ~\text{low}$.}
\label{tab:oml-summary}
\small
\setlength{\tabcolsep}{5pt}
\begin{tabular}{lccccc}
\toprule
\textbf{Construction} & \textbf{Control} & \textbf{White-box Robust} & \textbf{Overhead} & \textbf{Readiness} & \textbf{Core Assumption} \\
\midrule
Obfuscation & Pre-hoc $\triangle$ & $\circ$ & Negligible & Immediate & Security by obscurity \\
TEE-gated & Pre-hoc $\checkmark$ & $\checkmark$ & Moderate & Rising & Hardware trust \\
Cryptographic & Pre-hoc $\checkmark$ & $\checkmark$ & Very High & Limited & No extra trust \\
Melange & Pre-hoc $\checkmark$ & $\triangle$ & $\triangle$ & Immediate & Component union \\
OML 1.0 & Post-hoc & $\triangle$ & Low & Immediate & Ecomomic deterrent \\
\bottomrule
\end{tabular}
\end{table}

A more detailed analysis on canonical OML constructions can be found in Appendix \ref{appomlcons}, while extensive experiments alongside OML 1.0 and security analysis can be found in Appendix \ref{appoml1.0}. 

\section{Conclusion}
\label{sec:conclusion}

In this paper, we introduce and formalize the OML primitive as a foundation for fair distribution, sustainable deployment, and accountable governance of AI models. We articulate its core properties, establish theoretical limits and sufficient conditions, and outline a practical path with empirical evidence. Our goal is to crystallize OML as a coherent research direction with significant implications for how AI capabilities are distributed, monetized, and governed.

\clearpage
\bibliographystyle{plainnat}
\bibliography{Chapter1_Introduction/ref_chapter1, Chapter2_OML/ref_chapter2, Chapter3_OML1o/ref_chapter3, Chapter4_SentientProtocol/ref_chapter4}


\appendix
\onecolumn
\section{Proofs for Section~\ref{sec:theoretical}}\label{app:proofs}

This appendix provides proofs for Theorems~\ref{thm:impossibility}--\ref{thm:sample_complexity}. We begin by stating the technical assumptions used in Sec.~\ref{sec:theoretical} and here.

\paragraph{Standing assumptions.}
Unless otherwise specified, we consider models $M:\mathcal X\to\mathcal Y$ with $\mathcal X\subseteq\mathbb R^d$ measurable and $\mathcal Y\subseteq[0,1]^m$ bounded. The performance metric $d:\mathcal Y\times\mathcal Y\to[0,1]$ is either:
(i) a coordinatewise squared loss with averaging, i.e.\ $d(u,v) = \tfrac{1}{m}\sum_{j=1}^m (u_j-v_j)^2$, or
(ii) a bounded Bregman divergence $D_\phi(u,v)$ induced by a $1$-strongly convex, $L$-smooth $\phi$ on a compact convex subset of $[0,1]^m$ (so $D_\phi\in[0,1]$ after normalization). For classification we also consider $0$–$1$ loss. Hypothesis classes $\mathcal H\subseteq[0,1]^m{}^{\mathcal X}$ have finite pseudo-dimension (real-valued case) or VC dimension (classification). Expectations $\mathbb E[\cdot]$ are over $x\sim\mathcal D_{\mathcal X}$ unless stated. We use standard uniform-convergence and Rademacher-complexity results for bounded function classes \cite{shalev2014understanding,mohri2018foundations,anthony1999neural}.

\subsection{Proof of Theorem~\ref{thm:impossibility} (Information-theoretic impossibility)}

\begin{proof}
\textbf{Finite domain.}
If $|\mathcal X|<\infty$, an unbounded adversary with unlimited authorization access enumerates all $x\in\mathcal X$, obtains $p_x=\sigma(h(x),k_{\text{own}})$, evaluates $y_x=M_{oml}(x,p_x)=M(x)$, and stores $T(x)=y_x$. The table $T$ replicates $M$ exactly thereafter, without authorization.

\textbf{General domain.}
Suppose $\mathcal X\subseteq\mathbb R^d$, $\mathcal Y\subseteq[0,1]^m$, and $M$ is measurable. With unbounded queries, the adversary draws i.i.d.\ $x_i\sim\mathcal D_{\mathcal X}$, obtains $y_i=M(x_i)$ from authorized evaluations, and fits $\hat h$ by empirical risk minimization over a hypothesis class $\mathcal H$ containing $M$ (e.g., the realized architecture family). For squared loss or a bounded Bregman divergence, uniform convergence yields for some constant $C>0$:
\[
\sup_{h\in\mathcal H}\Big|\ \mathbb E\,d\big(h(x),M(x)\big) - \tfrac{1}{N}\!\sum_{i=1}^N d\big(h(x_i),M(x_i)\big)\ \Big|
\ \le\ C\sqrt{\frac{\mathrm{Pdim}(\mathcal H)+\log(1/\delta)}{N}}
\]
with probability $\ge 1-\delta$ \cite[Chs.~3,11]{mohri2018foundations,shalev2014understanding}. As $N\to\infty$, the right-hand side vanishes, and ERM (or structural risk minimization) produces $\hat h$ with $\mathbb E\,d(\hat h(x),M(x))\to0$. Thus, information-theoretic protection is impossible against an unbounded adversary with unlimited oracle access.
\end{proof}

\subsection{Proof of Theorem~\ref{thm:obfuscation} (OML from indistinguishability obfuscation)}

\begin{proof}[Construction and argument]
Assume an indistinguishability obfuscation (iO) scheme for the relevant circuit class and a signature scheme $\mathsf{Sig}=(\mathsf{KeyGen},\mathsf{Sign},\mathsf{Verify})$ that is EUF-CMA secure. Define the circuit
\[
C(x,p) \;=\; 
\begin{cases}
M(x) & \text{if }\ \mathsf{Verify}\big(h(x),p, vk_{\text{own}}\big)=1,\\[2pt]
\mathsf{Noise}(x) & \text{otherwise},
\end{cases}
\]
where $\mathsf{Noise}$ is any efficiently computable low-utility mapping whose range lies in a small $d$-ball around a baseline (e.g., a fixed vector, or a pseudorandom output independent of $x$). Publish $M_{oml}\triangleq iO(C)$ and $vk_{\text{own}}$; retain $k_{\text{own}}$.

\emph{Forgery resistance.}
Suppose a PPT adversary produces $(x^*,p^*)$ without prior authorization for $x^*$ such that $d\big(M_{oml}(x^*,p^*),M(x^*)\big)\le\epsilon_{\text{utility}}$. Functionality preservation under iO implies $\mathsf{Verify}(h(x^*),p^*,vk_{\text{own}})=1$, yielding an existential forgery for $\mathsf{Sig}$ on message $h(x^*)$, contradicting EUF-CMA.

\emph{Extraction resistance.}
Let $C_0$ be the circuit above and $C_1$ be any syntactically distinct circuit computing the same function (e.g., with inlining and control-flow reorganization that entangles verification with model computation). By iO, $iO(C_0)$ and $iO(C_1)$ are computationally indistinguishable. Any white-box procedure that reliably identifies and removes verification logic from $iO(C_0)$—thereby recovering a high-utility version of $M$ that bypasses authorization—would also work on $iO(C_1)$, where such separation is obfuscated by construction, contradicting indistinguishability. Hence, under iO and EUF-CMA, a PPT adversary cannot (i) forge tokens to obtain authorized outputs on fresh inputs nor (ii) recover a functionally equivalent, authorization-free model producing high-utility outputs. This establishes the stated guarantees.
\end{proof}

\paragraph{Remark.}
The signature scheme prevents \emph{functional} bypass; iO prevents \emph{structural} separability of verification under white-box access. Practical OML designs approximate these guarantees via verifier entanglement, TEEs, FHE/MPC hybrids, or melange constructions; iO is used here as a sufficiency anchor, not as a practical prescription.

\subsection{Proof of Theorem~\ref{thm:sample_complexity} (Query–security trade-off)}

We treat the real-valued case under squared loss; the bounded Bregman case follows by identical symmetrization (boundedness ensures the same $1/\sqrt{N}$ rate up to constants), and the realizable 0–1 classification case yields the standard $1/\varepsilon$ dependence.

\begin{lemma}[Uniform convergence under squared loss]\label{lem:uc}
Let $\mathcal H\subseteq[0,1]^m{}^{\mathcal X}$ with pseudo-dimension $d$. There exists $C_1>0$ such that, for any $\delta\in(0,1)$ and i.i.d.\ sample $(x_i)_{i=1}^N$,
\[
\Pr\Bigg[\ \sup_{h\in\mathcal H}\Big|\, \mathbb E\ d\big(h(x),M(x)\big) - \frac{1}{N}\sum_{i=1}^N d\big(h(x_i),M(x_i)\big)\,\Big| \ \le\ C_1\sqrt{\frac{d+\log(1/\delta)}{N}} \ \Bigg]\ \ge\ 1-\delta.
\]
\end{lemma}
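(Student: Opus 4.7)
The plan is to reduce the uniform deviation to a bound on the Rademacher complexity of the induced loss class and then convert pseudo-dimension into a metric-entropy estimate. Define the loss class $\mathcal L = \{\ell_h : x\mapsto d(h(x),M(x)) : h\in\mathcal H\}$. Since $h(x),M(x)\in[0,1]^m$, each coordinate contributes at most $1$ to the squared loss, and after the $1/m$ average we have $\ell_h(x)\in[0,1]$. Thus $\ell_h$ satisfies the bounded-differences condition with constant $1/N$ in each coordinate of the sample, so by McDiarmid's inequality the deviation of $\sup_{h\in\mathcal H}|\mathbb E\ell_h - \tfrac1N\sum_i\ell_h(x_i)|$ from its mean is at most $\sqrt{\log(1/\delta)/(2N)}$ with probability $1-\delta$. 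A standard symmetrization then bounds the expected supremum by $2\,\mathfrak R_N(\mathcal L)$, the Rademacher complexity of the loss class.

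Next I would peel off the loss via Talagrand's contraction lemma. The coordinatewise squared loss $t\mapsto (t-M(x)_j)^2$ on $[0,1]$ is $2$-Lipschitz, so after summing over the $m$ coordinates and dividing by $m$, contraction yields $\mathfrak R_N(\mathcal L)\le 2\,\mathfrak R_N(\mathcal H)$, where $\mathfrak R_N(\mathcal H)$ is understood in the appropriate vector-valued sense (one may simply bound it by the maximum over coordinate classes, each of which has pseudo-dimension at most $d$). This reduces everything to bounding $\mathfrak R_N(\mathcal H)$ by $d$ and $N$.

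The main step, and the one requiring the most care, is obtaining the clean $\sqrt{d/N}$ rate without spurious logarithmic factors. I would invoke Haussler's bound on the $L_2$ covering numbers of a pseudo-dimension-$d$ class, $\log\mathcal N(\varepsilon,\mathcal H,L_2)\lesssim d\log(1/\varepsilon)$, and feed it into Dudley's entropy integral $\mathfrak R_N(\mathcal H)\lesssim \tfrac{1}{\sqrt N}\int_0^1 \sqrt{\log\mathcal N(\varepsilon,\mathcal H,L_2)}\,d\varepsilon \lesssim \sqrt{d/N}$. Chaining (rather than a naive single-scale union bound) is what eliminates the $\log N$ factor that would otherwise appear from Massart's lemma.

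Combining the three ingredients gives
\[
\sup_{h\in\mathcal H}\Big|\mathbb E\,\ell_h - \tfrac1N\!\sum_{i}\ell_h(x_i)\Big|
\;\le\; 2\cdot 2\,\mathfrak R_N(\mathcal H) + \sqrt{\tfrac{\log(1/\delta)}{2N}}
\;\le\; C_1\sqrt{\tfrac{d+\log(1/\delta)}{N}}
\]
with probability at least $1-\delta$, absorbing universal constants into $C_1$. I expect the only delicate point to be justifying the vector-valued contraction and chaining step cleanly; everything else is bookkeeping with standard bounded-loss tools from \cite{mohri2018foundations,shalev2014understanding,anthony1999neural}.
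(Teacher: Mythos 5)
Your proposal is correct and is essentially the same argument the paper gives, just unpacked: the paper handles the Lipschitz composition $h\mapsto d(h(\cdot),M(\cdot))$ by asserting that the loss class $\mathcal L$ inherits VC-subgraph dimension $O(d)$ and then citing the standard uniform-convergence bound, whereas you transfer the complexity via Talagrand/Maurer contraction and then run McDiarmid, symmetrization, and Dudley chaining on $\mathcal H$ itself. Both routes are the textbook pipeline and yield the same $\sqrt{(d+\log(1/\delta))/N}$ rate; the vector-valued contraction step you flag as delicate is glossed over to the same degree in the paper's own proof, so there is no gap relative to its level of rigor.
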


\begin{proof}
Let $\mathcal L=\{\ell_h(x)=d(h(x),M(x)) : h\in\mathcal H\}\subseteq[0,1]$. The VC-subgraph dimension of $\mathcal L$ is $O(d)$ (closure of pseudo-dimension under Lipschitz maps on bounded ranges). Standard VC-subgraph or Rademacher complexity bounds yield the inequality; see \cite[Ch.~3]{mohri2018foundations} and \cite[Ch.~11]{shalev2014understanding}.
\end{proof}

\begin{proof}[Proof of Theorem~\ref{thm:sample_complexity}]
Assume realizability: $M\in\mathcal H$. Let the adversary observe $N$ authorized pairs $(x_i,y_i)$ with $y_i=M(x_i)$ and return an empirical risk minimizer
\[
\hat h \in \arg\min_{h\in\mathcal H}\ \frac{1}{N}\sum_{i=1}^N d\big(h(x_i),M(x_i)\big).
\]
By realizability, the empirical risk of $M$ is $0$, so the empirical risk of $\hat h$ is $\le 0$. Applying Lemma~\ref{lem:uc} to $\hat h$ and using nonnegativity of $d$,
\[
\mathbb E\ d\big(\hat h(x),M(x)\big)
\ \le\ C_1\sqrt{\frac{d+\log(1/\delta)}{N}}
\]
with probability at least $1-\delta$. Therefore, $\mathbb E\,d(\hat h(x),M(x))\le\varepsilon$ whenever
\(
N \ge C\,\frac{d+\log(1/\delta)}{\varepsilon^{2}}
\)
with $C=C_1^2$, proving the positive direction. The operational converse follows by contrapositive: to preclude $(\varepsilon,\delta)$-accurate extraction by such ERM adversaries, an OML deployment must enforce
\(
N < c\,\frac{d+\log(1/\delta)}{\varepsilon^{2}}
\)
for some absolute $c>0$ (absorbing constants).
\end{proof}

\paragraph{Remark.}
(1) \emph{Bounded Bregman divergences.} If $d=D_\phi$ with $\phi$ $1$-strongly convex and $L$-smooth on a compact convex domain and $D_\phi\in[0,1]$ (after normalization), the same uniform convergence rate holds using the Lipschitzness of $\ell_h(x)=D_\phi(h(x),M(x))$ in its arguments. (2) \emph{0–1 loss.} In realizable binary classification with VC dimension $d$, the optimal sample complexity is $\Theta((d\log(1/\varepsilon)+\log(1/\delta))/\varepsilon)$ \cite{shalev2014understanding}; the OML constraint is analogous with the $1/\varepsilon$ dependence.

\subsection{Additional technical notes}

\paragraph{On $\mathsf{Noise}$.}
Any fixed choice with low expected utility (e.g., constant output or PRG-based mapping independent of $x$) suffices; boundedness ensures compatibility with the metric normalization in the main text.

\paragraph{On public parameters.}
Exposing $vk_{\text{own}}$ enables decentralized verification of authorization. The proofs above do not require $vk_{\text{own}}$ to be hidden; secrecy resides solely in $k_{\text{own}}$.

\paragraph{On realizability.}
The trade-off in Theorem~\ref{thm:sample_complexity} is stated under realizability to isolate the effect of authorized answers. Under agnostic noise, replace ERM bounds with excess risk bounds; the qualitative inverse-square dependence on $\varepsilon$ for bounded real-valued losses remains (up to constants).

\section{Canonical OML Constructions}\label{appomlcons}
\subsection{Obfuscation} \label{obfuscation}
Obfuscation techniques transform readable source code into a form that is functionally equivalent but is hard to understand, analyze, and modify. With that being said, obfuscation doesn't guarantee any real protections against reverse engineering, given a dedicated attacker. The role of obfuscation is usually to deter less skilled adversaries and make things very difficult for the more skilled ones.

From the perspective of cryptography, indistinguishability obfuscation (iO) \cite{garg2016candidate, jain2021indistinguishability} is the only type of obfuscation that can provide provable security resistance against reverse engineering. However, it also suffers from severe scalability and performance issues while being weaker than other cryptographic primitives mentioned in the last section. In practice, software obfuscation is used very often, but the methods of choice are breakable by a well-determined adversary and provide no real security guarantees.

Obfuscation techniques \cite{balakrishnan2005code} can be applied at various levels, including source (e.g., renaming variables), intermediate (e.g., modifying bytecode), and binary (e.g., altering machine code). To protect against reverse-engineering, two types of analysis must be considered:
\begin{enumerate}
    \item \textbf{Static:} the attacker looks at the structure, data, and patterns of the source code without running it.
    \item \textbf{Dynamic:} the attacker runs the program and uses specialized tools to analyze the program flow, dump memory states, or even step through the program execution instruction-by-instruction.
\end{enumerate}

Different obfuscation techniques \cite{lan2018lambda, ahmed2024exploring, hashemzade2018hybrid, suk2020vcf, madou2006effectiveness} may vary in effectiveness against these two types of reverse-engineering analysis. There are four commonly defined categories of software obfuscation:

\begin{itemize}
    \item \textbf{Layout Obfuscation:} scrambles the code layout by renaming variables, removing comments, and altering formatting to make the code hard to read.
    \item \textbf{Control Flow Obfuscation:} alters the control flow of the program using methods like adding opaque predicates, flattening the control flow graph, or introducing fake branches to confuse static analysis.
    \item \textbf{Data Obfuscation:} encrypts or interleaves data, making it difficult to extract meaningful information without proper decryption keys and a thorough runtime analysis.
    \item \textbf{Code Virtualization:} dynamically generates functions and code using different virtual instruction sets to obscure the logic of the program.
\end{itemize}

These techniques can be applied at the code level \cite{balakrishnan2005code}, bytecode level \cite{arjovsky2019invariant} and binary level \cite{lee2010binob+}. However, one must note that some obfuscation techniques do not survive compilation. Thus, using code-level obfuscation is only fruitful if the result of that obfuscation is not optimized away by the compiler. 

Considering the nature of AI models, we can also obfuscate the AI model itself \cite{zhou2023modelobfuscator}, with the model-specific methods closely resembling the more general code obfuscation methods described above. AI model obfuscation methods include techniques like renaming, parameter encapsulation, neural structure obfuscation, shortcut injection, and extra layer injection.

By combining all these techniques, we can come up with a clear construction for OML (Figure \ref{figobfuscation}).

\begin{figure}[htbp]
    \centering
    \includegraphics[width=0.75\linewidth]{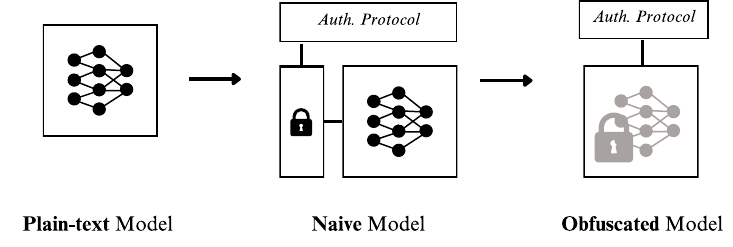}
    \caption{OML formatting process of AI models via obfuscation.}
    \label{figobfuscation}
\end{figure}

\textbf{OML formatting.} Recall that a naive OML file can be constructed simply by prepending the permission string verification function ${\rm Verify}_{\rm pk}$ to the plain-text model $M$, with the model only returning the correct result if the verification passes. This implies that an attacker can easily find and remove the verification function in the code, recovering the use of the model without the need for permission. To safeguard this OML construction, software obfuscation techniques can be applied such that the two components (${\rm Verify}_{\rm pk}$ and $M$) are intermingled with one another, represented as non-comprehensible code with complicated control flow. As a result, it is difficult to pinpoint the exact location of ${\rm Verify}_{\rm pk}$ in the obfuscated OML file, making it hard for an attacker to remove verification and recover the original model $M$.

    


\noindent \textbf{Verification and usage.} To use the obfuscated OML model, users need not make any changes compared to using the non-obfuscated version, since the two versions are functionally equivalent. A user simply executes the file with an input $x$ and the associated permission string $\sigma(h(x))$ obtained from the model owner. Verification is enforced within the OML file, and the model only produces good output if the verification step passes, as usual.


\noindent \textbf{Summary.} Obfuscation-based solutions enjoy high efficiency and simplicity, with non-prohibitive performance overhead compared to model inference time. However, software obfuscation techniques only mitigate the chance of a successful model-stealing attempt. Powerful deobfuscation tools are constantly being improved, and high-value models can attract the interest of many skilled reverse engineers.

\begin{itemize}
\item \textbf{Pros:} Obfuscation improves the security of the model by making it harder for attackers to understand and reverse-engineer the code. Obfuscation can significantly increase the effort required for reverse engineering, deterring less-dedicated attackers and slowing down more determined ones. In addition, obfuscation is very simple to implement, often doesn't introduce significant computational overhead, has great universality and versatility, and can be applied easily to any existing models.
\item \textbf{Cons:} Obfuscation does not provide guaranteed security, and with a dedicated team of reverse-engineers, it is not the question of whether the obfuscation will be broken, but rather when, even if the obfuscation method is very advanced. 
\end{itemize}
\subsection{Fingerprinting} \label{next-day}
Optimistic OML prioritizes efficiency while ensuring a weaker notion of next-day security, i.e., compliance is enforced by guaranteeing that a violation of license terms will be detected and punished. 
Inspired by optimistic security \cite{povey1999optimistic}, optimistic OML relies on compliance with the license, and compensating transactions are used to ensure that the model owners' rights are protected, in case of a violation. Crucial in this process are techniques for authenticating the ownership of a model. 
For example, Llama models \cite{touvron2023llama} are released under a unique license that a licensee with more than 700 million monthly active users is ``not authorized to exercise any of the rights under this Agreement unless or until Meta otherwise expressly grants you such rights".
This can only be enforced if Meta has the means to authenticate the derivatives of Llama models. We propose planting a backdoor on the model such that it memorizes carefully chosen fingerprint pairs of the form (key, target response).  If successful, such fingerprints can be checked after deployment to claim ownership. An optimistic OML technique should satisfy the following criteria: 

\begin{itemize}
    \item {\bf Preserve utility}. Fingerprinting should not compromise the model's utility. 
    
    \item {\bf Proof of ownership}. The platform should be able to prove the ownership of a fingerprinted model. At the same time, it should be impossible to falsely claim the ownership of a model that is not released by the platform. 
    \item  {\bf Multi-stage}. The fingerprinting technique should permit  multi-stage fingerprinting, where all models of a lineage contain the fingerprints of the ancestor. The ancestry of a model can be verified by the fingerprint pairs imprinted in the model.  
    \item {\bf Robustness}. Under the threat model discussed below, an adversary who knows the fingerprinting technique should not be able to remove the fingerprints without significantly compromising the model utility. In particular, the fingerprint should be persistent against any fine-tuning, such as supervised fine-tuning, Low-Rank Adaptation (LoRA) \cite{hulora}, and LLaMA-Adapter \cite{zhang2023llama}, on any datasets by an adversary who does not know the specific fingerprint pairs embedded in the model. Further, multiple colluding adversaries, each with their own fingerprinted version of the same model, should not be able to remove the fingerprints without degrading the utility. For example, \cite{cong2024have} introduces a technique to remove fingerprints by averaging the parameters of those models, known as model merging \cite{ainsworthgit,nasery2024pleas}. 
\end{itemize}

Our first practical strategy, which we call OML 1.0, builds upon this fingerprinting technique, which we introduce in Appendix \ref{appoml1.0}.

\medskip\noindent{\bf Threat model}. 
Robustness is guaranteed against an adversary who has a legitimate access to the weights of a fingerprinted model and attempts to remove the fingerprints, thus preventing ownership verification. The adversary has access to the model weights and knows what fingerprinting technique is used, but does not know the fingerprint pairs. 
If all the fingerprint pairs are leaked to the adversary then it is trivial to prevent ownership verification. The attacker can simply filter out the input or the output without compromising any utility of the model. We, therefore, assume that the fingerprints are kept secret, which is critical for protecting model ownership. Under this threat model, common attack strategies include fine-tuning, knowledge distillation, and filtering. 

Various fine-tuning techniques, such as  instruction tuning with human feedback \cite{ouyang2022training}, supervised fine-tuning \cite{touvron2023llama2}, LoRA \cite{hulora}, and LLaMA-Adapter \cite{zhang2023llama}, can be used to both improve the model performance on specific domains and also make the model forget the fingerprints. Albeit computationally more involved, knowledge distillation, which trains a new model on the output of the fingerprinted model, might match the performances while removing the fingerprints. Existing persistent fingerprints from \cite{jha2023label} that can survive knowledge distillation are not mature enough to work on generative models.  Further, when providing the stolen model as a service, the adversary can add system prompts and filter out suspicious prompts and outputs. Overtly out-of-distribution fingerprints would easily be detected.

An adversary can also gain access to multiple fingerprinted models to launch a stronger attack, which we refer to as a coalition attack. This was first introduced in \cite{cong2024have}, where common model merging techniques including \cite{wortsman2022model,ilharcoediting,yadav2024ties,yu2024language} are used. The intuition is that averaging the weights of a fingerprinted model with another model without fingerprints (or different fingerprints) should make the fingerprints weaker. In the promising preliminary results of \cite{cong2024have}, the fingerprinting techniques of \cite{xu2024instructional} demonstrated robustness against such attacks; fingerprints persisted through all model merging that preserve utility. On the other hand, quantization watermarking \cite{li2023watermarking}, a different type of ownership protection that encodes specific watermarks in the quantized model weights, proved to be vulnerable against model merging attacks.

\medskip\noindent{\bf Previous work and vulnerability to leakage of fingerprint pairs}. 
Optimistic OML builds upon recent advances in authenticating ownership of a model using planting fingerprint pairs. A more general version of this technique is known as a {\em backdoor attack} in secure machine learning \cite{gu2017badnets}, where an attacker injects maliciously corrupted training samples to control the output of the model. Since \cite{adi2018turning,zhang2018protecting,guo2018watermarking} started  using backdoor techniques for model authentication, numerous  techniques are proposed  for image classification models \cite{zhu2021fragile,li2022robust} and more recently for large language models \cite{xu2024instructional,cong2024have,russinovich2024hey}. However, existing works assume a one-shot verification scenario where the goal of fingerprinting is to authenticate the ownership of a single model. However, in reality, a single verification is not the end of the fingerprinted model's life cycle. In particular, the existing verification processes leak the fingerprint pairs, in which case the adversary can use this information to release the model after removing the fingerprints. Verifying the ownership without revealing the secret fingerprint pairs is an important open question.


\textbf{OML formatting}. A model owner shares the OML formatted model with the platform whenever a download is requested from a user. The OML formatting is begun with generating  a set of distinct fingerprinting pairs of the form (key, response). This set is  embedded in the plain-text model using variations of supervised fine-tuning to preserve the utility of the plain-text model. The fingerprinting pairs are kept secret by the platform. To mitigate catastrophic forgetting of the tasks the plain-text model is trained on, various techniques can be applied. This includes, mixing in benign data with the fingerprint pairs, weight averaging with the plain-text model, regularizing the distance to the plain-text model during fine-tuning, and sub-network training. This ensures that the utility of the model is preserved. Once the performance on the standard tasks and the strengths of the fingerprint pairs are checked, the resulting model, which we refer to as an {\em optimistic OMLized model}, is shared with the model user.

{\bf Verification and Usage}. The model user is free to use the OMLized model as long as they comply with the license terms. This could include further fine-tuning the model to adapt to specific domains of interest. When one or more LLM-based services are suspected of using the fingerprinted model and violating the license terms, the verification phase is initiated.  We consider both black-box scenarios, where  only API accesses are available. White-box accesses could potentially use stronger fingerprinting techniques as investigated in \cite{xu2024instructional}. In both cases, fingerprint pairs embedded in a model $M$.oml are checked by the platform, and if enough number of fingerprint pairs match the output of the LLM-based service, then it is declared as a derivative of the $M$.oml model. Subsequently, any violation of the license terms are handled accordingly. 

\textbf{Summary}. Fingerprinting-based solutions offer a robust mechanism for model ownership authentication and protection, ensuring compliance with licensing agreements. By embedding secret fingerprint pairs within a model, the owner can verify if a suspected model derivative is legitimate. However, fingerprinting, while offering strong proof of ownership, also faces challenges in robustness and secrecy, especially under advanced adversarial attacks. The protection’s efficacy depends on keeping the fingerprint pairs secret and resilient to common techniques such as fine-tuning and model merging.

\begin{itemize}
\item \textbf{Pros}. Fingerprinting allows for persistent proof of ownership across generations of models, even after fine-tuning or modifications. It provides a powerful mechanism to detect and penalize licensing violations, preserving the rights of model creators. Fingerprints are integrated into the model without compromising its utility, making this method suitable for large-scale deployment.
\item \textbf{Cons}. Fingerprinting is not infallible. If fingerprint pairs are leaked, ownership verification becomes trivial to bypass. Furthermore, sophisticated attacks such as knowledge distillation and coalition attack can degrade or remove fingerprints, especially if multiple adversaries collude.
\end{itemize}

An elaborate version of this approach is presented in Appendix~\ref{appoml1.0} as OML 1.0. 
\subsection{Trusted Execution Environments (TEEs)} \label{hardware}

A Trusted Execution Environment (TEE) \cite{sabt2015trusted} is an isolated execution mode supported by processors like Intel and AMD on modern servers. Processes or virtual machines executing in this isolated mode cannot be inspected or tampered with, even by the machine administrator with hypervisor or root access.

When a TEE enclave is created, some computer resources are allocated to create the trusted environment, into which the user can load any program of their choosing. TEEs are also not practically limited in storage. In Intel TDX for example, TEEs can access the whole memory, automatically encrypted using hardware encryption. Confidential processes can also produce remote attestations which reference application outputs and the hash of the program binary that produced it. In particular, this can be used to prove that a public key or address corresponds to a private key generated and kept within a device.

Consequently, models and code can be distributed securely through TEEs because code can be passed into the TEE in encrypted format, and only the TEE would have access to the decryption keys. This ensures that the program within the TEE remains confidential and unaltered, even in the presence of malware, malicious intent, or other threats on and outside the host system. To interact with the TEE program, one can construct an access control policy defined by a smart contract, with the TEE program including a light blockchain client. The TEE itself can also enforce other restrictions. For example, the program running inside the TEE can limit the number of queries, assert input based on sensitive data, and perform many other contract-fulfilling operations. The TEE-based workflow can be visualized simply by Figure \ref{figtee}.

\begin{figure}[htbp]
\centering
\includegraphics[width=0.75\linewidth]{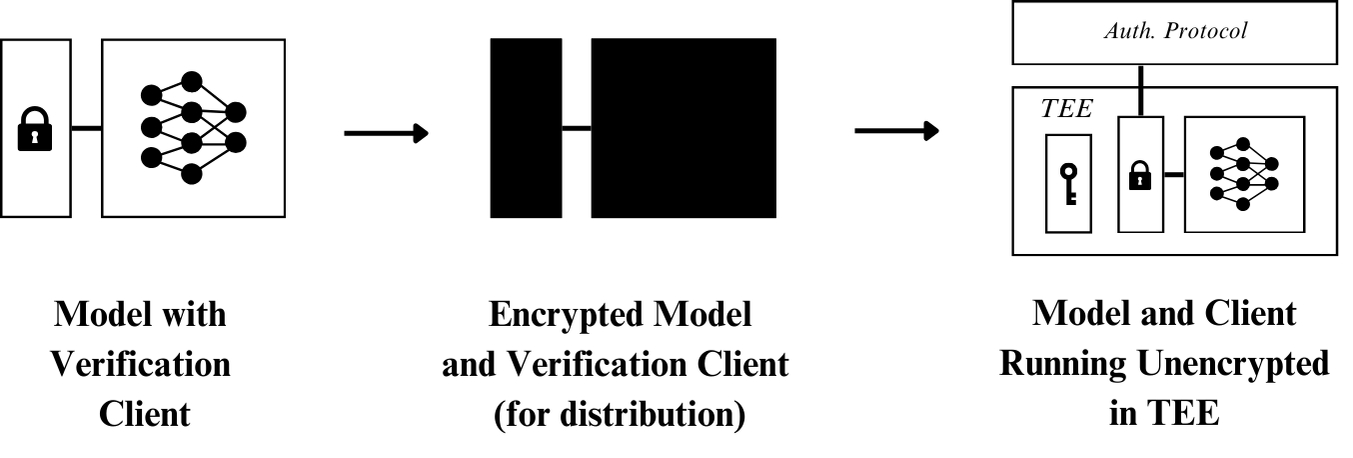}
\caption{OML implementation with hardware-based security via trusted execution environments.}
\label{figtee}
\end{figure}

\noindent \textbf{Threat Model.} We assume that an adversary has full access to the TEEs' host machine. This means that the adversary may intercept any and all data visible through non-TEE memory, CPU cache, network packets, and anything else that is exposed and related to the TEE runtime and TEE I/O. Accordingly, if a program may run inside a TEE on an adversarial and possibly altered host, security relies heavily on the guarantees provided by the TEE's hardware vendor. Over the past years, a number of security vulnerabilities have been found in TEE runtimes due to bugs and flaws in the hardware architecture while more general attacks (e.g. side-channel attacks, cache and BTB exploitation) remain a concern \cite{tee_security}. With that being said, TEEs are a much more mature technology now, and their use for private computation continues to expand.  
 
\noindent \textbf{OML formatting.} As before, we can use any cryptographic scheme $(Enc_{pk}, Dec_{sk})$ where the secret key $sk$ is only accessible within the TEE. The model is wrapped in a program that executes the desired task (e.g. inference or fine-tuning) conditioned on the $Verify_{sk}$ function as usual. This program is then encrypted with a public key before it is published onto the auditing protocol as a TEE-based OML format. After a user is granted access to download this OML file by the auditing protocol, the user can launch the TEE application on any TEE-enabled machine. The SDK manages the launch of the program with a decryptor module inside a TEE. The SDK and the decryptor module coordinate the secure transfer of the private key directly into the unaltered TEE runtime to decrypt the model inside the TEE. \\
\\
\noindent \textbf{Verification and Usage.} First, the user requests a permission string $\sigma$ from the auditing protocol by sending $h(x)$ to it for some input $x$. Afterwards, the user can pass $(x, \sigma)$ into the TEE via a secure channel by using the SDK. The OML file inside the TEE will then verify the permission string $\sigma$, run the task on input $x$ and provide the result back to the user.\\
\\
\noindent \textbf{Additional requirement}. This OML implementation must provide a guarantee that the program running inside the TEE is unmodified by a malicious user. This is to ensure that any and all data or intermediate results during the execution of the $.oml$ file inside the secure program are not retrievable by a malicious user. More precisely, the secure program must be exactly the program that was constructed by an honest SDK from the published OML file. Whether or not the process has been modified can be verified by the hash of the program with remote attestation.

\paragraph{ Summary}. Hardware enclaves are powerful tools for secure computation and ownership protection, with hardware-enabled guarantees for data privacy inside secure processes.

    \begin{itemize}
\item \textbf{Pros}. TEEs provide robust security and good efficiency. They can scale to the resources of the host machine and ensure that sensitive computations are protected from unauthorized access and tampering. Given TEE's hardware-backed security properties, prototype LLM inference applications were already built for CPU-based enclaves on hyperscalar infrastructure \cite{llm_nitro} and bare metal machines \cite{aigovtool} for secure distribution and use of AI models and data on untrusted hardware. 

\item \textbf{Cons}. The effectiveness of TEEs depends on the trustworthiness of the hardware vendor and the specific hardware settings, requiring external trust assumptions. Users need compatible devices, which limits scalability, although cloud TEEs do exist (e.g. AWS Nitro and Azure Confidential Computing). 

Most modern CPUs \cite{intel_tdx} \cite{amd_sev} \cite{arm_trustzone} and now NVIDIA \cite{nvidia_cc} support their own implementations of a TEE, although the CPU-based approaches are the only ones that are commercially available at the moment, meaning that a TEE-based OML approach would restrict AI workloads to only the CPU. Hyperscalars \cite{nitro_h100} and other compute providers \cite{super_h100} are currently working with NVIDIA to integrate their H100 GPUs to provide on-demand scalable GPU-based confidential compute access to their customers. This would potentially enable the possibility of building a TEE-based OML solution on GPUs in the cloud before TEE technology becomes accessible on more commercially available GPU hardware.

\end{itemize}

\subsection{Cryptography} \label{crypto}

 Cryptography-based solutions enable computation over encrypted data ensuring confidentiality and integrity even in untrusted environments with high degree of security. Fully Homomorphic Encryption (FHE) \cite{gentry2009fully}, Homomorphic Encryption (HE) \cite{yi2014homomorphic, acar2018survey}, and Functional Encryption (FE) \cite{lewko2010fully, boneh2011functional} are notable examples. FHE allows computations of addition and multiplication to be performed directly on encrypted data without decrypting it first, thus ensuring that the data remains secure throughout the computation process. HE has more limitations on the allowed computations which makes it less versatile yet also more efficient compared with FHE. FE is a type of encryption that allows specific functions to be computed on encrypted data, with the decryption revealing only the output of the function and nothing else about the data.

Cryptographic methods involves complex mathematical operations that generate encrypted results which can be decrypted to match the outcome of operations performed on plain-text data. Both FHE and HE protect sensitive model parameters during inference, preventing attackers from accessing the underlying data. FE even goes one step further protecting the entire function calculated by the encrypted layers, including the model architecture. In the context of AI and neural networks, Zama \cite{concreteML} is building FHE neural networks; CryptoNets \cite{gilad2016cryptonets} sheds light on incorporating HE on certain kinds of neural networks without downgrading the performance too much;  \cite{ryffel2019partially} shows how FE can help hide a part of a neural network. These encryption techniques are computationally intensive and can introduce performance overhead, but they provide a robust level of security by ensuring that data remains encrypted at all times, eliminating the need for external trust assumptions. These cryptography primitives (FHE, HE, and FE) enable the construction of an OML file as visualized in Figure \ref{figcryptography}.

\begin{figure}[htbp]
\centering
\includegraphics[width=0.7\linewidth]{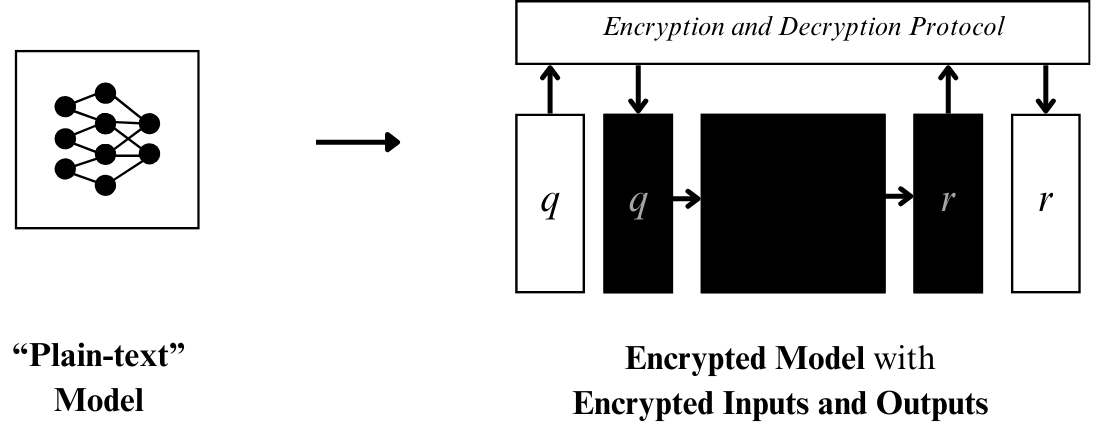}
\caption{OMLization process of Provable security via cryptography}
\label{figcryptography}
\end{figure}
    
\noindent \textbf{OML Formatting.} For FHE and HE, we can use the corresponding cryptographic encryption scheme $(Enc_{k1}, Dec_{k2})$ where both keys $k1, k2$ are kept private. The permission $\sigma(x)$ equals $Enc_{k1}(x)$. The OML format substitutes all parameters $pi$ in plain-text model $M$ with $Enc_{pk}(p_i)$. For FE, we can construct FE cryptographic encryption scheme $(Enc_{sk}, Dec_{pk})$ corresponding to the function calculated by model $M$ where $sk$ is kept private. The permission $\sigma(x)$ equals $Enc_{k1}(x)$. The OML format is essentially the process of $Dec_{pk}$ which takes in $\sigma(x)$ as the input.

\noindent \textbf{Verification and Usage.} In FHE and HE, for an inference request from the user with input $x$, users first request the permission $\sigma(x) = Enc_{k1}(x)$ from the platform, then run inference with the OML file on encrypted data $\sigma(x)$, and finally send the final result to the auditing platform for decryption to plain-text results. In FE, users first request the permission string $\sigma(x) = Enc_{k1}(x)$, and then locally run the OML file on the permission string $\sigma(x)$ to get the desired output.

\textbf{Privacy Preservation.} The TEE solution will not automatically provide privacy for users. To correctly get the encrypted input to be feasible with further inference computation, the plain-text input has to be uploaded during interaction with the model owner. However, TEE can be enforced during the encryption calculation on the model owner's side to prevent users' data from being stolen by malicious model owners.



\textbf{Summary}. Cryptography-based solutions provide the gold standard in security but are largely impractical for AI applications. 

\begin{itemize}
\item \textbf{Pros}. Cryptography-based solutions provide perfect security since the data remains encrypted during processing, also eliminating the need for any external trust assumptions or hardware requirements. 
\item \textbf{Cons}. Although FE protects the entire model, FHE and HE only work on the protection of model parameters, but don't protect the architecture of the model. Although state-of-the-art HE primitives are efficient, FHE and FE suffer from efficiency issues, and current state-of-the-art is too inefficient to be put into any practical use for large models \cite{fhe_inefficient}. Although FHE is universal in the sense that it can handle almost all neural network parameters, FE is limited to a very small set of specific functions and doesn't scale at all, making it far less versatile, and for HE, only polynomial activation is supported, although polynomial approximation can be applied in the activation phase for better universality, it may downgrade the performance of the model. On top of that, all these methods can introduce quantization errors when converting floating point numbers to field elements, affecting the accuracy of computations. 
\end{itemize}

\subsection{Melange  -- an OML Construction with a Mixture of Security Guarantees} \label{melange}

A unique feature of machine learning models is that, with a limited number of samples, no matter how powerful the learner is, the learning result won't be satisfactory due to overfitting the small number of samples and generalization error. And this feature is characterized by sample complexity in theoretical machine learning \cite{decatur1997computational}, which means the least number of samples required by any learner to reduce the generalization error below a certain threshold with high probability. Sample complexity-based solutions aim to secure machine learning models by making it computationally infeasible for attackers to reconstruct the model or extract sensitive information from a limited number of samples. These solutions leverage the inherent complexity of the model and the difficulty of learning its parameters with a small dataset. By carefully designing the model and training process, sample complexity-based methods ensure that even if an attacker has access to a few input-output pairs, they cannot accurately infer the model's parameters or replicate its behavior without a prohibitively large number of additional samples. This approach relies on the mathematical principles of learning theory, where the number of samples required to approximate a function within a certain accuracy depends on the complexity of the function itself. Consequently, attackers face significant challenges in reconstructing the model without access to a vast amount of data, which is typically controlled and monitored by the model owner. Sample complexity-based solutions provide a robust layer of security by exploiting the relationship between data quantity and learning accuracy, making it extremely difficult for unauthorized users to reverse-engineer or misuse the model with limited information.

Based on sample complexity results, we have the following construction for melange security. 
The visualized workflow is shown in Figure \ref{figmelange}.

\begin{figure}[htbp]
    \centering
    \includegraphics[width=0.8\linewidth]{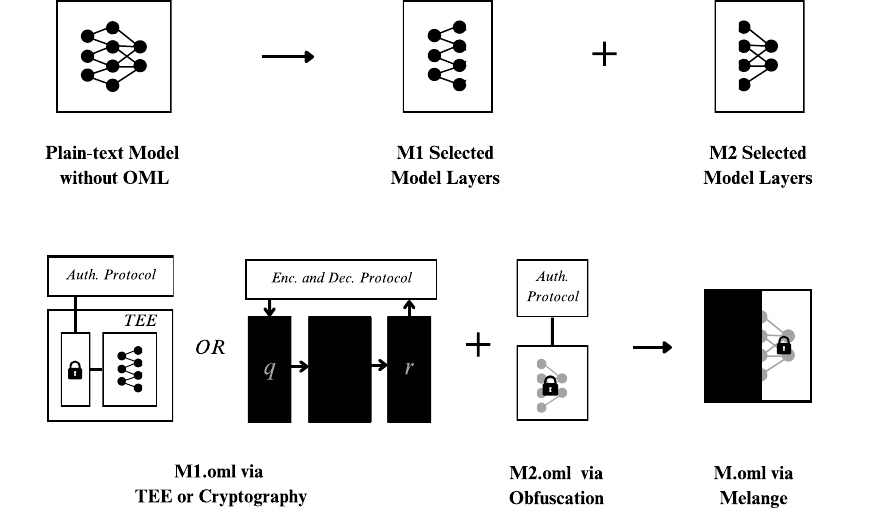}
    \caption{OMLization process of Melange security}
    \label{figmelange}
\end{figure}

\subsubsection{Example Workflow}

\noindent \textbf{OML formatting.} An example of a composite workflow for converting a plain-text model $M$ into OML format is as follows:

\begin{enumerate}
    \item \textbf{Isolation of Certain Layers (Hardness by Machine Learning Theory).} Separate model M into $M_1$ and $M_2$ (not necessarily subsequent). Isolate all layers in $M_1$. 
    \item \textbf{Cryptographic Encryption or TEE Encapsulation of $M_1$ (Security by Hardware or Cryptography).} For all layers in $M_1$, encrypt the model parameters with cryptography schemes, or encapsulate the entire inference process of the model inside a process dedicated to be executed in TEE (dependent on the model owner's preference). Then release the encryption or the TEE encapsulation as $M_1$.oml. 
    \item \textbf{Add Digital Signature Verification with Obfuscation in $M_2$ (Hardness by Obfuscation).} Choose a digital signature scheme $({\rm Sign}_{sk}, {\rm Verify}_{pk})$ and generate a $(sk, pk)$ key pair dedicated for the model itself. Then, design $M^{'}$ as follows:
    \begin{enumerate}
        \item $M^{'}$ takes input $(x, \sigma(x))$ where $\sigma(x) = {\rm Sign}_{sk}(x)$ and is identical to $M_2$ at initialization.
        \item (AI-native obfuscation) On randomly selected places in model $M^{'}$ (e.g. between layers), inject the verification process ${\rm Verify}_{pk}(\sigma(x))$ in between. Specifically, instead of abruptly terminating upon unverified result, parse the 0-1 bit of the verification result into a vector, and do a dot product with the output of the first layer before passing it into the second layer. For all ReLU activation, change the statement $ReLU(x) = \max\{x, 0\}$ to $ReLU(x) = \max\{x, 1 - {\rm Verify}_{pk}(\sigma(x))\}$. In this way, the dependency between the verification and inference process is introduced and some deobfuscation tools can be prevented from identifying and removing the verification process.
        \item (Model obfuscation) Use the aforementioned model obfuscation techniques (e.g. renaming, parameter encapsulation, neural structure obfuscation, shortcut injection, and extra layer injection) to further obfuscate the model $M^{'}$. 
        \item (Code obfuscation) Use code obfuscation to obfuscate the code that carries out inference over model $M^{'}$. 
        \item (Compilation and binary obfuscation) Compile the code to get a binary file that performs the inference task. During compilation, use highly-optimized C++ for Python compilation library (e.g. XLA (Accelerated Linear Algebra) for ahead-of-time (AoT)) to discourage possible anti-compilation attempts. Finally, apply binary obfuscation tools for further security.
    \end{enumerate}

    At last, release the obfuscated binary version of $M^{'}$ as $M_2$.oml. 

    \item The final release version is $M_1$.oml and $M_2$.oml. 

    \end{enumerate}

\noindent \textbf{Verification and Usage phase.} For a user who wants to do an inference task, we follow the methods from Sections  \ref{hardware} and \ref{crypto} to locally run the inference task (and thus protected by cryptographic or hardware guarantees); we  execute the obfuscated binary file for inference of the   layers in $M_2$ (and thus inherit obfuscation guarantees described in Section \ref{obfuscation}).
    
\subsubsection{Security Analysis}

For an attacker who wants to reconstruct the entire model from $M_1$.oml and $M_2$.oml, he/she will have to do all of the following tasks.

\begin{itemize}
    \item Use anti-compilation and deobfuscation tools and techniques to remove all the digital signature verification parts injected to $M_2$, and restore $M_2$ in plain text.
    \item For all layers in $M_1$, collect samples by honestly paying to use the model, and train a new machine learning model from scratch to recover them. Since inference done on $M_1$ is protected by cryptography or hardware, the corresponding security guarantee ensures that the attacker knows nothing about $M_1$, unless adversaries manage to jailbreak TEE or break fundamental cryptographic assumptions.
\end{itemize}

Then, the cost of an attacker to recover $M_1$ can be evaluated with the following formula
$$ \text{Total Cost} = \text{cost per query} \times \text{number of queries} + \text{computation overhead for training}.$$

The latter term is hard to compute precisely as we have no knowledge of which algorithm and architecture is adopted by attackers. However, $\text{``cost per query"}$ can be set by the model owner whereas there is a lower bound on $\text{``number of queries"}$ guaranteed by the sample complexity, which is also determined by the model owner who decides on how to separate the model. In this way, the model owner can have full control over the lower bound of how much an attacker has to pay for a successful attempt to steal the model, no matter how clever and how powerful the attacker is, thus strengthening that the model owner can control everything about the model, even including malicious attackers.

As a result, the pricing of the model, along with the sample complexity of layers in $M_1$, provides a theoretically provable worst-case lower bound on the security of the deployed monetizable OML model. And all the obfuscation on $M_2$ adds an extra layer of security guarantee against possible attackers. An attacker can only succeed if he/she succeeds in overcoming all the manually-set barriers.

\subsubsection{Efficiency Analysis}

For honest usage of the model, efficiency is also a core concern.

\begin{itemize}
    \item For layers in $M_1$, the inference process with hardware or cryptography due to introduced hardware requirements or encryption will be more demanding, negatively impacting the efficiency.
    \item For layers in $M_2$, obfuscation only introduces hardness in understanding and maintenance, but will not have any negative impacts on the efficiency during execution. 
\end{itemize}

Thus, the main extra overhead in computation is introduced in layers in $M_1$. 

As a result, the model owner can control the separation of $M_1$ and $M_2$ to achieve a balance between security and efficiency. Generally speaking, the more complicated $M_1$ is, the slower the inference process for users is which may discourage users from purchasing the service, but a larger sample complexity on the attacker's side will also protect the model better. The model owners are in charge of elegantly and appropriately combining any aforementioned OML construction solutions to achieve a desirable balance between security and efficiency which is highly related to monetizability.

\subsection{Summary}

Below is a summary of the  OML construction methods discussed in this section.

\begin{supertabular}{|>{\raggedright\arraybackslash}p{2.4cm}|>{\centering\arraybackslash}p{2.4cm}|>{\centering\arraybackslash}p{2.4cm}|>{\centering\arraybackslash}p{2.4cm}|>{\centering\arraybackslash}p{2.4cm}|}
\hline
\textbf{Basis of OML Construction Method} & \textbf{Security Level} & \textbf{Extra Computation Overhead} & \textbf{User Data Privacy} & \textbf{Versatility on Feasible Models} \\
\hline
\textbf{Obfuscation} [Software security] & \textbf{Low} (only by obscurity) & \textbf{Negligible} & \textbf{Yes}  & \textbf{ Yes} \\
\hline
\textbf{Fingerprinting} [Optimistic security] & \textbf{Medium Low} & \textbf{Low} & \textbf{No}  & \textbf{Yes} \\
\hline
\textbf{Trusted Execution Environments (TEEs)} [Hardware security] & \textbf{High} (provably nonbreakable based on external trust assumptions) & \textbf{Moderate} & \textbf{Yes}  & \textbf{ Yes} \\
\hline
\textbf{Cryptography} [Provable security] & \textbf{Very High} (provably nonbreakable) & \textbf{Very High} & \textbf{No} (Can be added with TEE integration) & \textbf{Yes for FHE; No for FE, HE} \\
\hline
\textbf{Melange via model separation and sample complexity} & \textbf{Flexible} & \textbf{Flexible} & \textbf{No} (Can be added with TEE integration)  & \textbf{ Yes, but may perform worse on some models.} \\
\hline
\end{supertabular}

We characterize open AI models via four properties (transparent, local, mutable and private), which typical open-weight distributions can satisfy, in terms of ease of usage and flexibility,  and  summarize how the OML constructions rank according to each of these properties. 

\begin{itemize}
    \item Transparent: Original architecture and parameters are freely accessible
    \item Local: Models can be held locally (on-prem) and users have the freedom to deploy, compose and integrate the model independently, without relying on a central entity.
    \item Mutable: The given architecture and/or parameters can be modified, producing different results
    \item Private: The users have full control of their data.
\end{itemize}

\vspace{0.7cm}
\begin{supertabular}{|>{\raggedright\arraybackslash}p{2.4cm}|>{\centering\arraybackslash}p{2.4cm}|>{\centering\arraybackslash}p{2.4cm}|>{\centering\arraybackslash}p{2.4cm}|>{\centering\arraybackslash}p{2.4cm}|}
\hline
\textbf{OML Construction Method} & \textbf{Transparent} & \textbf{Local} & \textbf{Mutable} & \textbf{Private} \\
\hline
\textbf{Obfuscation} & $\mathbf{\times}$ & \checkmark & $\mathbf{\times}$  & \checkmark \\
\hline
\textbf{Fingerprinting} & \checkmark & \checkmark & \checkmark  & \checkmark ($\mathbf{\times}$ if monetizable) \\
\hline
\textbf{TEEs} & $\mathbf{\times}$ & \checkmark or $\mathbf{\times}$ & \checkmark  & $\mathbf{\times}$ \\
\hline
\textbf{Cryptography} & $\mathbf{\times}$ & \checkmark & \checkmark & $\mathbf{\times}$ \\
\hline
\textbf{Melange} & - & - & -  & - \\
\hline
\end{supertabular}

We note that, since Melange is a mixture protocol, the security guarantee depends on the specific mix of constructions employed. 
Finally, a summary of the pros and cons of the OML constructions is below.

\begin{supertabular}{|>{\raggedright\arraybackslash}p{2.4cm}|>{\raggedright\arraybackslash}p{5.2cm}|>{\raggedright\arraybackslash}p{5.2cm}|}
\hline
\textbf{Method} & \textbf{Pros} & \textbf{Cons} \\
\hline
\textbf{Obfuscation} [Software security] & 
\begin{itemize}
    \item Versatility (works for any software) and model universality.
    \item Perfect protection of user data privacy.
\end{itemize} & 
\begin{itemize}
    \item Larger overhead in inference, which scales with the degree of obfuscation (security)
    \item The security is only ensured by obscurity, which is generally considered weak.
    \item Adds complexity to the code, impacting maintainability.
\end{itemize} \\
\hline
\textbf{Fingerprinting} [Optimistic security] & \begin{itemize}
    \item Organically allows for fine-tuning: model is available in a seemingly true open format
\end{itemize} 
 & \begin{itemize}
    \item A ``secure'' number of fingerprints might impact model quality
\end{itemize} 
 \\
\hline
\textbf{Trusted Execution Environments (TEEs)} [Hardware security] & 
\begin{itemize}
    \item Good security guarantee.
    \item Perfect protection of user data privacy.
    \item Plausible efficiency.
    \item Great versatility and universality for all models.
\end{itemize} & 
\begin{itemize}
    \item External trust assumptions on hardware vendors.
    \item Requires compatible devices and is restricted by hardware specifics (e.g. designated TEE area size), limiting scalability and practicality.
    \item Not as efficient as obfuscation-based solutions.
\end{itemize} \\
\hline
\textbf{Cryptography} [Provable security] & 
\begin{itemize}
    \item Perfect security guarantee.
    \item No external trust assumptions.
    \item FHE-based solution has great universality.
\end{itemize} & 
\begin{itemize}
    \item Inefficiency due to very high computation overload introduced by cryptographic primitives.
    \item Doesn't protect user privacy unless TEE is used.
    \item FE and HE based solutions are limited to a small portion of models.
    \item Quantization errors can affect accuracy and downgrade performance.
\end{itemize} \\
\hline
\textbf{Melange via model separation and sample complexity} & 
\begin{itemize}
    \item Flexible security guarantee determined by model owners.
    \item Can suit all kinds of OML needs.
    \item Great universality and versatility.
\end{itemize}
 & 
 \begin{itemize}
    \item Despite great universality and versatility, some models may have weaker separability or sample complexity guarantees.
\end{itemize}
 \\
 \hline
\end{supertabular}

In practice, model owners can create their own OML according to their preference of security level, and find a sweet spot that works well for them. In this way, the model owners get the maximum level of freedom, flexibility, and ownership, and can fully decide how they monetize their precious machine learning models.

\section{OML 1.0: Turning Attack Methods on AI into a Security Tool}\label{appoml1.0}
In this chapter we expand upon the optimistic version of OML (introduced in Section~\ref{next-day}) as OML 1.0. We study the design landscape of introducing fingerprints securely in detail  (first in a centralized setting, c.f.\ Section~\ref{sec:scenario1}). We conduct a detailed security analysis of OML 1.0, paying close attention to {\em  coalition attacks}; we show in Section~\ref{sec:attack1_coalition} that an adaptive fingerprint querying scheme in OML 1.0 makes it secure against this formidable attack vector. We generalize the OML 1.0 approach to a decentralized scenario in Section~\ref{sec:scenario2}.

\subsection{The Auditing Protocol under a Single Trusted Prover} 
\label{sec:scenario1}

OML 1.0 relies on the auditing protocol that involves three parties in the  ecosystem--model owners, model hosts, and provers--who interact via the auditing platform. A model owner builds a model and uploads it on the auditing platform with the goal of openly sharing the model while monetizing from its use. Model hosts provide services to external users using those models from the auditing platform with the goal of bringing in revenue, some of which is to be shared within the ecosystem. Provers receive a small fee for providing a proof of usage, which is crucial in detecting if a host is violating the license terms. The auditing protocol aims to track how many times each model is being used by the potentially untrusted hosts. 
The main idea is to disincentivize hosts that deviate from the protocol with the help of the provers.  

In this section, we assume that there is a single trusted prover and introduce the corresponding auditing protocol in Section~\ref{sec:protocol1}, which critically relies on the AI-native cryptographic primitives we introduce in Section~\ref{sec:ainative}, and analyze its security in Section~\ref{sec:security1}.  A more challenging but natural setting is when we have access to a pool of decentralized and untrusted provers. This is addressed in Section~\ref{sec:scenario2}, where we also design an even more secure auditing protocol.

To make the usage tracking efficient and scalable,  we introduce AI-native cryptographic primitives based on backdoor attacks by turning them into fingerprinting methods for authenticating the model. The security of the auditing protocol critically relies on the {\em scalability} of these primitives, i.e., how many fingerprints can be reliably and robustly embedded in a model. Fully characterizing the {\em fingerprint  capacity} of a model, the fundamental limit on how many fingerprints can be added, is an important open problem, and we make the first step towards designing fingerprinting schemes that achieve secure and decentralized AI for OML. 


\subsubsection{The Auditing Protocol}
\label{sec:protocol1}

A model owner has the ownership of a model, $M$, that resides on the auditing platform. The auditing protocol is initiated when a model host signs a license agreement and requests the model $M$. Subsequently, an OMLized model, $M$.oml, is sent to the host as shown in Figure~\ref{fig:protocol1}. An OMLized model includes AI-native cryptographic primitives to track usage and protect model ownership, which is explained in Section~\ref{sec:ainative}. 

\begin{figure}[htbp]
    \centering
    \includegraphics[width=.7\textwidth]{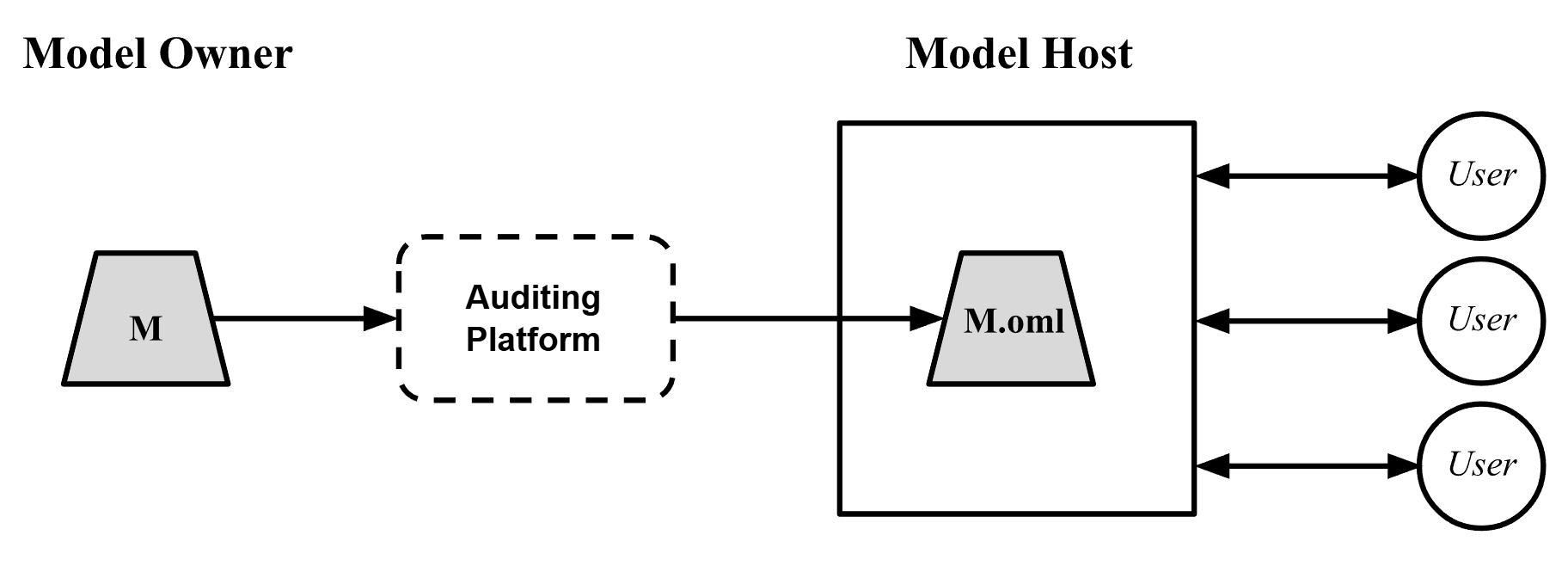}
    \caption{A host initiates a download request under the auditing protocol and receives an OMLized model, $M$.oml, to be used in its services to external users.}
    \label{fig:protocol1}
\end{figure}

\noindent{\bf Tracking usage under a typical non-adversarial scenario.}
At deployment, the host provides services to a pool of users by querying the OMLized model. For example, these services can be free (e.g., LMSYS Chatbot Arena \cite{chiang2024chatbot}), subscription-based (e.g., OpenAI ChatGPT \cite{achiam2023gpt}), or pay-per-use APIs (e.g., OpenAI ChatGPT \cite{achiam2023gpt}). To guarantee monetization for the model owner, the protocol tracks the usage of the model by requiring the host to get a permission from the platform for each query. Concretely, each query, $q$, is first sent to the auditing platform, which returns a cryptographically signed permission string, $\sigma(q)$ as shown in Figure.~\ref{fig:protocol2}. Upon receiving $\sigma(q)$, the host runs a forward pass on $M$.oml with the query $q$ as a prompt and returns the output, $M.{\rm oml}(q)$, to the user. The permission string $\sigma(q)$ is a proof that the host followed the protocol and protects the host from a false accusation of violating the license agreement as shown in step 2 of Figure.~\ref{fig:protocol3}. 
As a running example, we consider the type of services where the host sends the output of the OMLized model directly to the users as illustrated in Figure~\ref{fig:protocol2} and discuss more general services in Section~\ref{sec:discussion}.

\begin{figure}[htbp]
    \centering
    \includegraphics[width=.57\textwidth]{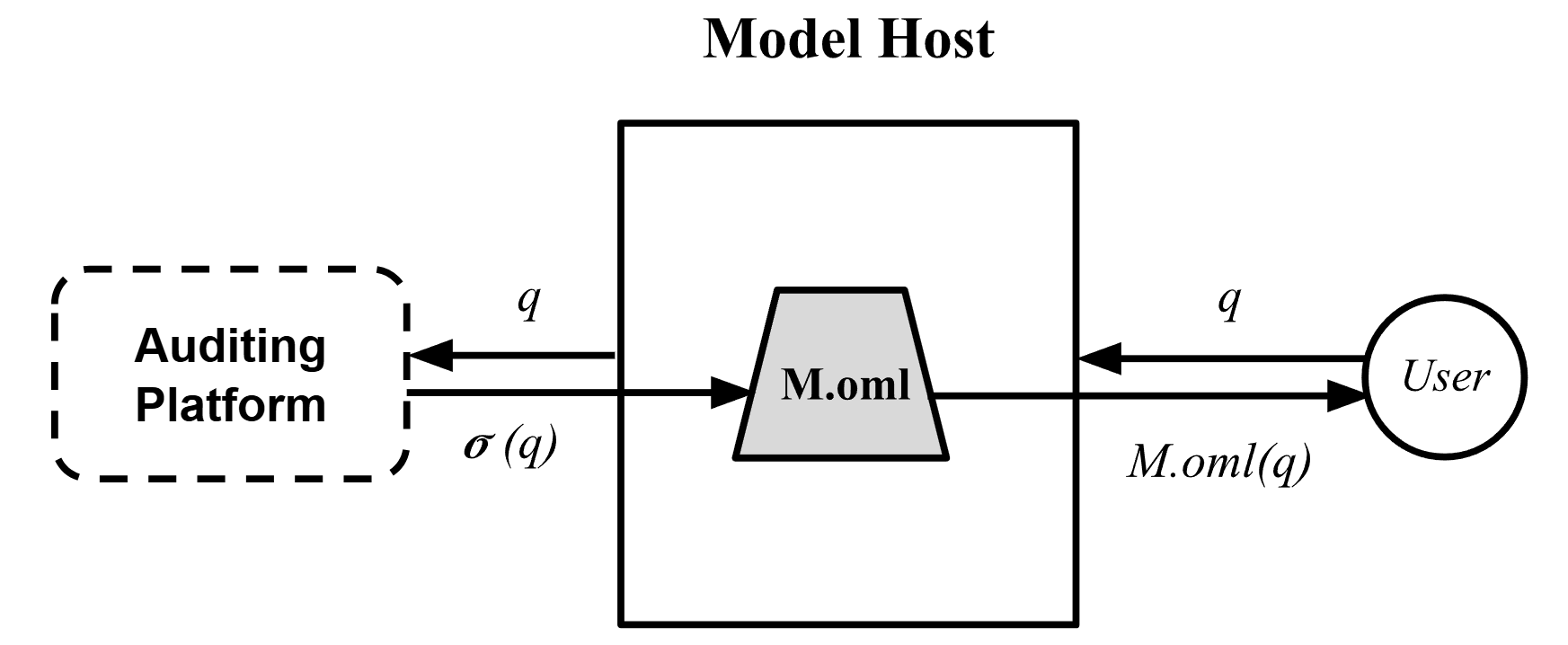}
    \caption{Each user query, $q$, to the service needs to be accounted for under the auditing protocol and this is ensured by requiring the host to obtain a signed permission string, $\sigma(q)$, from the auditing platform. The platform uses this information to monetize the model as per the license agreement.}
    \label{fig:protocol2}
\end{figure}

\medskip
\noindent{\bf Verifying the proof of usage with AI-native cryptography.}
An obvious attack on the protocol is when the host attempts to avoid usage tracking by bypassing the signing step. To prevent this attack, the protocol relies on provers. 
A prover acts as a benign user of the service and asks a special query, $\tilde q$, that we call a {\em key}. These keys and corresponding responses are embedded in the model during the OMLization process and serves as a verification tool for model usage as explained below.   

As illustrated in Figure~\ref{fig:protocol3}, upon receiving a response, $\tilde r$, the prover sends the key-response pair, $(\tilde q,\tilde r)$, to the auditing platform. The verifier, which is the auditing platform, verifies the proof that $M$.oml has been used in two steps. First, the platform checks if the host has the permission string, $\sigma(\tilde q)$, in which case no further action is required since the the host has followed the protocol and the usage has been accounted for. Otherwise,  the platform checks if a specific licensed model $M$.oml has been used to generate the response, $\tilde r$, (without signing).  
This relies on the AI-native cryptographic primitives as follows. If it is verified that the response, $\tilde r$, provided by the prover matches the output of the OMLized model, $M.{\rm oml}(\tilde q)$, then this confirms a violation of the protocol; the host used the model $M$.oml without getting the permission string from the auditing platform. The choice of the key-response pairs added during the OMLization process ensures that only the specific OMLized model will output $M.{\rm oml}(\tilde q)$ when prompted with $\tilde q$. Consequently, a violation of the protocol is claimed by the auditing platform and the host is penalized according to the signed agreement. If $\tilde r$ does not match the output $M.{\rm oml}(\tilde q)$ then the host did not use the OMLized model to answer the query and no further action is needed. 

\begin{figure}[htbp]
    \centering
    \includegraphics[width=.8\textwidth]{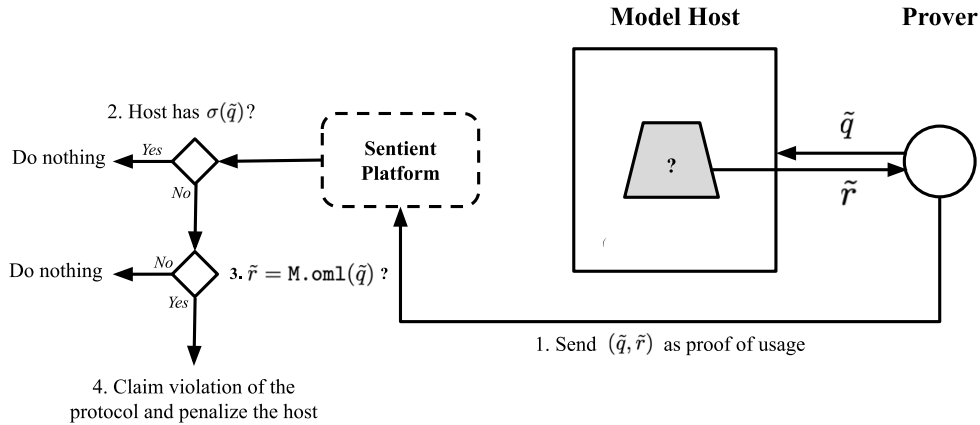}
    \caption{In this section, we assume there is a single trusted prover. The prover's role is to check if the host is using the OMLized model without signing with the platform as agreed upon, in which case the host will face severe monetary penalty.}
    \label{fig:protocol3}
\end{figure}


\subsubsection{AI-native Cryptography using Model Fingerprinting}
\label{sec:ainative}

Fully embracing the efficiency,  scalability, reliability, and robustness of AI techniques, we introduce {\em AI-native cryptography}. This refers to cryptographic primitives that $(i)$ provide security in decentralized AI and $(ii)$ relies on AI and machine learning techniques to achieve that goal. Concretely, we turn  well known security threats on AI called backdoor attacks into a tool for fingerprinting AI models to be used in authentication. Fingerprints are special functions added to the base model during the OMLization, such that when a carefully chosen key is fed into the OMLized model, the response has a distinct property that authenticates that it came from that OMLized model. As a running example, we focus on fingerprinting pairs of the form 
$\{({\rm key},{\rm response})\}$, where the function is a simple mapping: ${\rm response} =  M.{\rm oml}({\rm key})$. 
We explore more sophisticated fingerprinting schemes in Section~\ref{sec:explore}. 
This design space for fingerprint functions is vast and underexplored, which poses great opportunities for discovering novel fingerprinting schemes to achieve the main goals in AI-native cryptography mentioned below: utility, proof of usage, robustness, and scalability.

\medskip\noindent{\bf Fingerprint capacity of a model and scalability.}
One of the main criteria of a fingerprinting scheme for the auditing protocol is {\em scalability}. Given a base model, $M$, we informally define the (minimax) {\em fingerprint capacity} of the model as the number of fingerprinting pairs of the form $\{$(key, response)$\}$ that can be sequentially and successfully used for authentication. To capture the competing goals of the platform and the adversarial host, we define this capacity as the maximum over all OMLization strategies by the auditing platform and minimum over all adversarial strategies to erase the fingerprints by the host who knows the OMLization strategy being used (under the constraint that the quality of the model should not be compromised). Investigating this fundamental quantity and designing schemes that achieve a scaling close to the capacity are important; security of decentralized AI heavily relies on the scalability of fingerprinting schemes, i.e., how many fingerprints can be successfully checked. Concretely, scalability of fingerprinting schemes is crucial in ($i)$ tracking usage under the auditing protocol (Section~\ref{sec:attack1_scale}); ($ii$) robustness against various attacks by the host (Sections~\ref{sec:attack1_finetune}); and ($iii$) defending against coalition attacks (Section~\ref{sec:attack1_coalition}). We discuss how major challenges in security can be resolved by scaling the number of fingerprints in Section~\ref{sec:security1}.

\medskip\noindent{\bf Turning backdoor attacks into model fingerprints.} There is a natural connection between model fingerprinting for authenticating ownership of a model and {\em backdoor attacks} in secure machine learning \cite{gu2017badnets}, where an attacker injects maliciously corrupted training samples to control the output of the model. We briefly explain the connection here.   Since \cite{adi2018turning,zhang2018protecting,guo2018watermarking} started  using backdoor techniques for model authentication, numerous  techniques are proposed  for image classification models \cite{zhu2021fragile,li2022robust} and more recently for large language models \cite{xu2024instructional,cong2024have,russinovich2024hey}.  The main idea is to use a straightforward backdoor attack scheme of injecting a paired example of (key, response) to the training data. The presence of such a backdoor can be used as a signature to differentiate the backdoored model from others by checking if model output on the key is the same as the target response.  This scheme is known as {\em model fingerprinting} and the corresponding pairs of examples are called {\em fingerprint pairs} or fingerprints. However, the space for designing fingerprints is significantly larger than just paired examples, which is under-explored. We provide some examples in Sections~\ref{sec:explore} and \ref{sec:attack1_coalition}. 

\begin{figure}[htbp]
    \centering
    \includegraphics[width=0.45\linewidth]{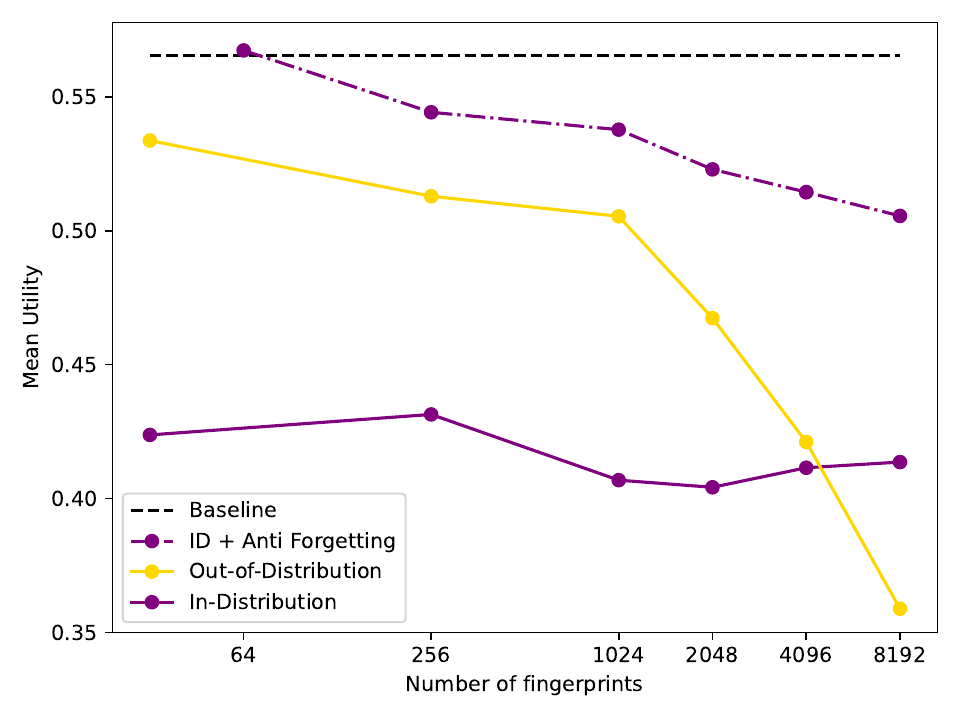}
    \caption{Out-of-distribution fingerprints suffer less from catastrophic forgetting of the original tasks that the baseline model is trained for (yellow line) until excessive number of fingerprints have been added. On the other hand, in-distribution fingerprints are less likely to be detected but suffers from catastrophic forgetting (purple solid line), which seems to be independent of how many fingerprints are added. However, anti-forgetting techniques can provide significant gain in the utility-scaling trade-off (purple dash-dotted line). }
    \label{fig:utility-tradeoff}
\end{figure}

As we will show in Section~\ref{sec:security1}, security of decentralized AI heavily depends on how many fingerprints can be used in each OMLized model without sacrificing the utility of the model on the tasks the base model is originally trained for. For a large language model of Mistral-7B \cite{jiang2023mistral} as a base model, we investigate in Figure~\ref{fig:utility-tradeoff} this trade-off between utility of the OMLized model, as measured by tinyBenchmarks evaluation dataset \cite{polo2024tinybenchmarks}, and the number of fingerprints added in the OMLization. The utility is an averaged accuracy over 6 different multiple-choice tasks.

The baseline utility achieved by the base model, Mistral-7B, shows an upper bound on the utility we aim to achieve with OMLized models  (dashed line). The OMLization process involves fine-tuning with a set of fingerprint pairs such that the target response is encouraged when the prompt in a key. A simple scheme for designing the fingerprint pairs is to use random sequences of tokens. Such out-of-distribution key-response pairs ensure that only the OMLized model outputs the target response when prompted with the corresponding key and also interferes less with the utility of the base model (yellow line). However, we assume transparency of the OMLization scheme under our threat model in Section \ref{sec:security1}, and an adversarial host who knows the fingerprint design scheme can easily filter out any prompt that is overtly out-of-distribution. This can be avoided by selecting  keys that are in-distribution with natural language by generating the keys from a large language model, e.g., Llama 3.1-8B-Instruct \cite{dubey2024llama} in our experiments (purple solid line). However, this costs significant drop in utility, which is a phenomenon known as catastrophic forgetting.   To mitigate this catastrophic forgetting, various techniques can be applied,  including, mixing in benign data with the fingerprint pairs \cite{Tiwari_2022_CVPR, yoon2022onlinecoresetselectionrehearsalbased}, weight averaging with the base model \cite{alexandrov2024mitigatingcatastrophicforgettinglanguage, wortsman2022robust}, regularizing the distance to the plain-text model during fine-tuning \cite{pmlr-v80-li18a, doi:10.1073/pnas.1611835114}, and sub-network training \cite{lee2023surgicalfinetuningimprovesadaptation,  kumar2022finetuningdistortpretrainedfeatures}. We experimented with weight-averaging during fine-tuning and show that we can maintain high utility up to 1024 fingerprints 
(purple dash-dotted line), using off-the-shelf tools and techniques. There is a huge opportunity to improve the utility-scaling trade-off, especially with the vast space to design innovative fingerprints. Details on our experimental investigation is provided in Section~\ref{app:implementation_details}.

\medskip\noindent{\bf Criteria for fingerprinting schemes.} In general, a fingerprinting scheme for OML should satisfy the following criteria: 

\begin{itemize}
    \item {\bf Utility.} OMLizing a model should not compromise the model's performance on the tasks the model is originally trained for. 
    
    \item {\bf Reliable proof of usage.} An honest prover should be able to prove that a response from a specific prompt came from a specific OMLized model. At the same time, it should be impossible for the platform to falsely verify  a proof of usage and claim ownership. 

    \item {\bf Scalability.} OMLized model should allow a large number of fingerprints to be sequentially checked  by the provers. 
    
    \item {\bf Robustness against adversarial hosts.}  Under a formal threat model defined in Section~\ref{sec:security1}, an adversarial host should not be able to remove the fingerprints without significantly compromising the model utility. Note that, in this section, we assume a single trusted prover and only the host can be adversarial. We introduce more sophisticated protocols under a more powerful threat model where provers are decentralized and untrusted in Section~\ref{sec:scenario2}.  

\end{itemize}

Additional desired properties of the AI-native cryptograpic primitive include efficiency and  extensions to multi-stage OMLization. Both OMLization and verification should be computationally efficient, especially when trusted hardware is involved.  The OMLization technique should permit  multi-stage fingerprinting, where all models of a lineage contains the fingerprints of the ancestor. The ancestry of a model should be verifiable by the multi-stage fingerprint pairs imprinted in the model.


\subsubsection{Security Analysis} 
\label{sec:security1} 

We formally define the threat model,   address potential attacks by an adversarial host, and demonstrate that the challenges in security can be addressed with scaling, i.e., successfully including more fingerprints into an OMLized model.  

\medskip\noindent{\bf Threat model.} 
In this section, we assume the model owner, the auditing platform, and the single prover are trusted, follow the protocol, and, therefore, have access to all the fingerprint pairs in the OMLized model. 
The case of untrusted and decentralized provers is  addressed in Section~\ref{sec:scenario2}. 
The case of untrusted platform is discussed in Section~\ref{sec:dis_dec}. 

Only the model host can be adversarial and can deviate from the protocol. Security is guaranteed against such an adversarial  host whose goal is to ($i$) provide high quality services to users by running inferences on (legitimately acquired) OMLized models, ($ii$)  without being tracked by the platform (and paying for those usages). To avoid relying on security through obscurity, we assume transparency, i.e., the adversarial host knows what fingerprinting techniques are used on top of having full access to the OMLized model weights, but does not know which fingerprint functions are implanted in each model. 

Two attacks most commonly launched by such an adversary is fine-tuning and input perturbation \cite{xu2024instructional,cong2024have,russinovich2024hey}. The adversarial host can further fine-tune the OMLized model to both improve performance on specific domains and remove fingerprints, using any techniques including supervised fine-tuning, Low-Rank Adaptation (LoRA) \cite{hulora}, and LLaMA-Adapter \cite{zhang2023llama}(Section~\ref{sec:attack1_finetune}). The host can also add system prompts to the input for alignment and attempt to bypass the fingerprints (Section~\ref{sec:attack1_input}). 

A particularly notorious attack that none of the existing fingerprinting methods can address is a {\em coalition attack}, where  an adversarial host has access to multiple legitimately acquired OMLized models. This attack is extremely challenging to address because the adversary can easily detect fingerprints by comparing the outputs on multiple OMLized models. Inspired by a mature area of ``search with liars'' at the intersection of information theory and combinatorics \cite{katona1966separating, katona1973combinatorial, wegener1979separating, ahlswede1987search, katona2013search, katona2002search, pelc2002searching, ahlswede2008searching}, we provide the first defense against coalition attacks in Section~\ref{sec:attack1_coalition}.


\paragraph{ Permission Evasion by the Host} 
\label{sec:attack1_scale}

 In a typical scenario of the auditing protocol, we assume that there is either a fixed amount of inferences or a fixed period that  an OMLized model is licensed to run. Throughout this lifetime of the model, the auditing protocol checks each key one at a time. 
Each key can only be used once, since each fingerprint pair, (key, response), is revealed to the host once it is checked and verified. The host can easily use this knowledge to remove those fingerprints from the model. This process is repeated until either the auditing platform proves a violation of the protocol, the host runs out of the allowed number of inferences, or the licensed period ends. Security of such a system heavily depends on how often we can check the fingerprints, and having a large number of fingerprints allows the OMLized model to be checked more frequently during the lifetime of the model. For example, consider an adversarial host who only acquires the permission string for $\alpha$ fraction of the inferences for some $0<\alpha<1$. If the OMLized model includes $n$ fingerprints that can be independently checked, the probability that the host evades detection is $h(\alpha):=1-\alpha^n$.  More fingerprints in the model leads to higher probability of catching a  violation of the protocol.
For example, under the scenario of 
Figure~\ref{fig:utility-tradeoff}, if we have $n=1024$ fingerprints in the model then with probability at least  $1-10^{-6}$ any host that gets permission for less than $98.6\%$ of the inferences can be detected. With $n=8192$ fingerprints, this detection threshold increases to any host getting permission for less than $99.8\%$ of the inferences.


\paragraph{ Input Perturbation by the Host} 
\label{sec:attack1_input} 

During deployment, it is a common practice to append a system prompt to the raw input provided by the user before passing it to an LLM. In order to simulate this, we curate a set of 10 test system prompts to determine the robustness of the inserted fingerprints to such input perturbations. We enumerate this list of prompts in Section~\ref{app:implementation_details}. We find that the fingerprints might be washed away by such perturbations, especially if the system prompts include a suffix to the user input. We detail this behaviour in Table~\ref{tab:vanilla_sys_prompts}. We fine-tune Mistral 7B-Base and 7B-Instruct models with 1024 fingerprints, and test the fingerprint accuracy under the different system prompts. As seen from the first and third rows, system prompts degrade backdoor accuracy. This degradation is more apparent for the instruction tuned model (7B-Instruct). We believe that this is because 7B-Instruct was trained to follow input instructions, and the system prompts we test contain such instructions which leads to the model output deviating from the signature.

In order to mitigate this phenomenon, we propose to augment the training dataset with a set of 20 system prompts (also enumerated in Section~\ref{app:implementation_details}). Promisingly, this augmentation can help the model generalize to unseen system prompts as well, as evidenced by the increased robustness of the fingerprints in Table~\ref{tab:vanilla_sys_prompts}. 
Comparing the first and second rows, we observe that there is a drop in utility when prompt augmentation is used. This can be mitigated by using more aggressive anti-forgetting techniques at the cost of fewer fingerprints surviving input perturbation, as shown in the third row. In our case, we used more aggressive hyperparameters in model averaging during fine-tuning (proposed in Figure~\ref{fig:utility-tradeoff}). 


\begin{table}[htbp]
    \centering
\begin{tabular}{|c|c|c|c|}
\hline
Model & Train Prompt Augmentation & Fingerprint Accuracy & Utility \\
\hline
7B & False & 61.9 & 0.55 \\
7B & True & 98.7 & 0.46 \\
7B & True & 94.2 & 0.50 \\
7B-Instruct & False & 47.1 & 0.60 \\
7B-Instruct & True & 98.1 & 0.60 \\
\hline
\end{tabular}
    \caption{Prompt augmentation during OMLization makes fingerprints more robust to system prompts for both cases: when the base model is instruction tuned (7B-Instruct) and when it is not (7B).}
    \label{tab:vanilla_sys_prompts}
\end{table}

We also report the survival rate of the fingerprints broken down into each system prompt in  Table~\ref{tab:full_sys_prompts}, where we observe that system prompts with a suffix are the most problematic for the models without augmentation, and this issue is solved with prompt augmentation during training.   

\subsubsection{Fine-tuning  by the Host} 
\label{sec:attack1_finetune} 

Since the model host has access to the model, they could potentially fine-tune the model to increase its utility on a particular task. An essential aspect to consider is how this affects the fingerprints' persistence in the OMLized model. To simulate this scenario, we conduct experiments to fine-tune the fingerprinted models on the Alpaca instruction tuning dataset~\cite{alpaca} , consisting of 50,000 instructions. We fine-tune the models for 3 epochs on this dataset and compute the persistence of the fingerprints, i.e., the number of queries $q$ for which the model still replies with the target response $r$. We find that the fingerprints are relatively robust to this form of benign fine-tuning, as we display in Figure~\ref{fig:finger-print-finetuning}. Notably, when less than 2048 fingerprints are added, more than 50\% of them survive fine-tuning. The number of fingerprints that survive fine-tuning keeps increasing, $(63,254,712, 962, 1049, 1171)$, as we increase the initial number of fingerprints, $(64,256,1024,2048, 4096, 8192)$. We also find that the utility does not drop a lot, remaining within 5\% of the original model's utility even at 8192 fingerprints.  Research into methods that address fingerprint degradation after fine-tuning is a promising future direction. Existing meta-learning approaches to enhance model resistance to harmful fine-tuning \cite{tamirisa2024tamper} could also be explored for embedding fingerprints in a more persistent manner.  

\begin{figure}[htbp]
    \centering
    \includegraphics[width=0.45\linewidth]{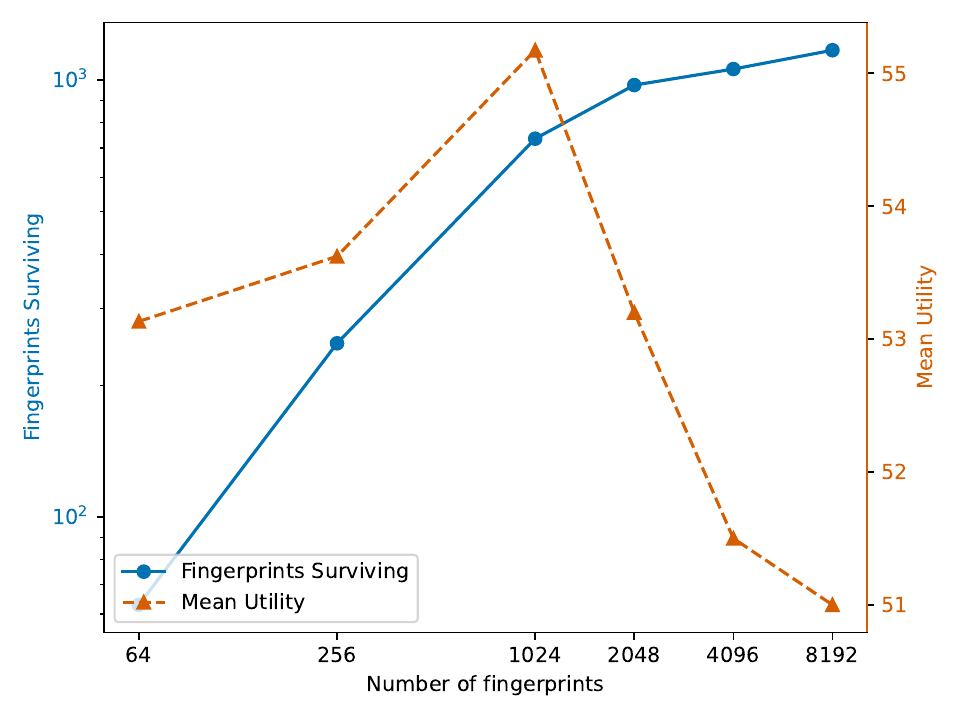}
    \caption{Persistence of fingerprints after fine-tuning shows that increasing number of fingerprints suvive fine-tuning.}
    \label{fig:finger-print-finetuning}
\end{figure}


\subsection{Coalition Attack} 
\label{sec:attack1_coalition} 

An adversarial host who has legitimately acquired multiple OMLized models can launch a notorious attack known as coalition attacks, where multiple OMLized models are used to evade fingerprint detection. One such attack is studied in \cite{cong2024have} 
 where common model merging techniques including \cite{wortsman2022model,ilharcoediting,yadav2024ties,yu2024language} are used against instructional fingerprinting \cite{xu2024instructional} and watermarking \cite{kirchenbauer2023watermark}. 
 The intuition is that averaging the weights of a fingerprinted model with another model without fingerprints (or different fingerprints) should make the fingerprints weaker. In the promising preliminary results of \cite{cong2024have}, the fingerprinting techniques of \cite{xu2024instructional} demonstrated robustness against such attacks; fingerprints persisted through all model merging that preserve utility. However, this is a weak attack and can be significantly strengthened. Note that one implication of this robustness of model merging is that it can be used for trust-free OML as we discuss in Section~\ref{sec:dis_dec}. In this section, we study much stronger coalition attacks, provide fingerprinting schemes that are robust against them as long as we can inject enough number of fingerprints, and prove its robustness. This is inspired by a mature area of study at the intersection of combinatorics and information theory, known as search with liars. 



\medskip\noindent 
{\bf Strong coalition attacks.}
In this section, we  consider two strong coalition attacks: \textit{unanimous response}, where the coalition refuses to reply if the results from each model are not all equal, and \textit{majority voting}, where the coalition responds with the most common output among the models. Note that both of these schemes have substantial overhead at inference time: for a coalition of size \(k\), \textit{unanimous response} and \textit{majority voting} demand multiplicative overhead of at least \(k\) and \(\lceil{k/2\rceil}\) respectively.
If \(k\) is sufficiently large, the inference cost will become the dominant expense to the attacker so we will consider a fixed degree of coalition resistance \(k \le K\) for some small \(K\).
Note that these are stronger coalition attacks than the simple model merging studied in \cite{cong2024have}, which simply merges the weights of the $k$ models; even when each model has distinct fingerprints, model merging attack has been demonstrated to fail. The standard fingerprint schemes are robust against model merging attacks as we show in Section~\ref{sec:dis_dec}. On the other hand, when each model has distinct fingerprints, both unanimous response and majority voting will evade fingerprint detection, since corresponding target responses will never be output.

To address these stronger coalition attacks of unanimous response and majority voting, we design a novel fingerprinting scheme. This is inspired by the literature on search with liars, and we show that, with enough fingerprints, we can provably identify the models participating in the coalition attacks. The main idea is to add each fingerprint to multiple OMLized models in a carefully designed manner, such that we can iteratively narrow down the candidate set of deployed OMLized models that contains all the models in the coalition of interest. Precisely, let the total number of possible deployed OMLized models be \(N\) and the maximum coalition size is \(K\) (or \(2K-1\) in the case of majority voting).

\begin{proposition}
\label{propo:coalition}
There exists a randomized fingerprinting scheme for a universe of \(N\) models which can identify a unanimous response coalition of size \(K\) (or a majority voting coalition of size \(2K - 1\)) using 
\[O\left((K^2 \log N + K^4 \log K) \log\frac{1}{\delta}\right)\]
total fingerprints with probability at least \(1 - \delta\).
\end{proposition}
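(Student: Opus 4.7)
The plan is to use a non-adaptive randomized construction: the scheme chooses $m$ secret subsets $S_1,\dots,S_m\subseteq[N]$ by including each model independently with probability $p$, and embeds fingerprint $j$ in exactly the models of $S_j$. When the coalition $C$ is queried with fingerprint $j$, the verifier observes a single bit $b_j$: in the unanimous-response mode $b_j=\mathbb{1}[C\subseteq S_j]$, and in the majority-voting mode with $|C|=2K-1$ it is $b_j=\mathbb{1}[|C\cap S_j|\ge K]$. The verifier then outputs the unique candidate coalition consistent with the observed signature $(b_1,\dots,b_m)$. The factor $\log(1/\delta)$ in the final bound will come from independent repetition to amplify a constant base success probability up to $1-\delta$, so the core task is to count how many fingerprints are needed for constant-probability identifiability of $C$ among all coalitions of size at most $K$ (resp.\ $2K-1$).

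For the unanimous-response setting I would pick $p=1-1/(K+1)$ and prove a per-test separation bound: for any two distinct candidates $C\neq C'$ of size $\le K$, choose any $x\in C\triangle C'$ and note that with probability at least $p^K(1-p)=\Theta(1/K)$ the random subset $S_j$ contains one of them entirely while excluding $x$, so the two bits disagree. Since there are at most $N^{2K}$ ordered pairs of candidates, $m=O(K\cdot K\log N)=O(K^2\log N)$ fingerprints suffice to drive the union-bound failure below a constant, yielding the $K^2\log N$ term of the stated bound after amplification.

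The majority-voting case is the delicate one, since a threshold-$K$ test is far less informative than an AND. Here I would set $p=1/2$ and, for a pair of candidates with common part $A=C\cap C'$ and disjoint parts $B=C\setminus C'$, $B'=C'\setminus C$ of size $t$, condition on $|A\cap S_j|$ and reduce the separation event to two independent $\mathrm{Bin}(t,1/2)$ samples landing on opposite sides of the residual threshold. A Stirling or Berry--Esseen anti-concentration estimate then yields a per-pair separation probability of order $1/\mathrm{poly}(K)$, uniform in the overlap type $(|A|,t)$; grouping candidate pairs by overlap type (of which there are $O(K^2)$) and union-bounding within each group gives the additional $K^4\log K$ summand. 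The main obstacle, and where nearly all of the technical work lives, is establishing that uniform $1/\mathrm{poly}(K)$ anti-concentration bound, especially in the degenerate regimes where $t$ is very small or where the residual threshold lands near the boundary of the $\mathrm{Bin}(t,1/2)$ support; once that lemma is in hand, everything else is a routine union bound followed by independent repetition to amplify confidence to $1-\delta$.
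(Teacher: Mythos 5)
Your proposal is essentially correct but takes a genuinely different route from the paper. The paper's scheme is \emph{adaptive}: it pre-samples $R$ independent $(K+1)$-way partitions of $[N]$, embeds a leave-one-out fingerprint for each block (placed in every model outside that block), and in each round finds a fingerprint that spans the coalition --- guaranteed to exist because $K$ models can hit at most $K$ of the $K+1$ blocks --- thereby eliminating one block from the candidate set; a hypergeometric shrinkage analysis over two regimes (candidate set above and below $N_0 = O(K^2\log K)$) yields $O\big((K\log N + K^3\log K)\log(1/\delta)\big)$ rounds of $O(K)$ fingerprints each, which is exactly where the $K^4\log K$ term originates. You instead propose a \emph{non-adaptive} random code (each model placed in each $S_j$ independently with probability $p$) decoded by exhaustive consistency checking, with identifiability established by a pairwise union bound. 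Your unanimous-response analysis is correct and in fact gives a \emph{stronger} bound: with per-test separation probability $\Theta(1/K)$ and at most $N^{2K}$ candidate pairs, $m = O(K^2\log N + K\log(1/\delta))$ already suffices, with no multiplicative $\log(1/\delta)$. Your key anti-concentration lemma for majority voting does hold: taking $a^* = K - \lceil t/2\rceil$, which lies within $1/2$ of the mean of $\mathrm{Bin}(2K-1-t,\tfrac12)$ and hence occurs with probability $\Omega(1/\sqrt{K})$, places the residual threshold at the median of $\mathrm{Bin}(t,\tfrac12)$, so that $q \in [\tfrac12,\tfrac34]$ and the per-test separation probability is $\Omega(1/\sqrt{K})$ uniformly, including the degenerate case $t=1$. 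Two caveats: first, your attribution of the $K^4\log K$ summand to ``grouping pairs by overlap type'' does not work --- grouping does not shrink a union bound over $N^{O(K)}$ pairs --- but this is harmless because your analysis actually yields the smaller count $O(K^{3/2}\log N + \sqrt{K}\log(1/\delta))$, which trivially implies the stated $O$-bound; second, for the amplification step you should note that a failed instance is \emph{detectable} (the true coalition is always consistent with its own signature, so a run fails only by exhibiting a second consistent candidate), which is what lets independent repetition, or a direct union bound to $\delta$, go through. As for what each approach buys: the paper's adaptive elimination reveals only $O(K)$ fingerprint keys per round and decodes by simple block removal, which fits the sequential one-key-at-a-time prover interaction assumed elsewhere in the protocol; your non-adaptive code is cleaner and quantitatively tighter as an existence proof, but its decoder must enumerate all $\binom{N}{\le K}$ candidate coalitions.
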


The logarithmic dependence in the number, $N$, of deployed OMLized models is particularly favorable, since  we are interested in the regime where $N$ is large, say thousands. Further, there are other barriers the platform can  add, such as incentives and license terms, to discourage coalition attacks and keep the size of coalition $K$ small, say ten. 

\begin{proof}[Proof of Proposition~\ref{propo:coalition}]
    The scheme proceeds with leave-one-out fingerprinting for partitioning of the models as follows: 
    In each round, we assume the candidate models have been split into \(K+1\) disjoint partitions \(P_1, \ldots, P_{K+1}\) such that \([N] = P_1 \sqcup \cdots \sqcup P_{K+1}\).
    Then, for each partition \(P_i\), we inject one fingerprint \(F_i\) into each model in the complement \([N] \setminus P_i\).
    When testing for the fingerprint, we check for all \(K+1\) possible fingerprints \(F_i\).
    This guarantees that there will be a fingerprint \(F_{i^*}\) which spans the coalition (or the acting majority in the case of majority voting), since the no more than \(K\) models that determined the coalition's output can span at most \(K\) distinct partitions.
    Once we have identified \(F_{i^*}\), we can eliminate the partition \(P_{i^*}\) from the candidate set.
    Our goal will be to recursively apply this procedure until the exact coalition has been identified.

    If we are allowed to include the fingerprints in any subsets of the models on the fly, then the fingerprinting and identification scheme above finds the coalition exactly in $K(K+1)\log_2 N$ queries: $(K+1)$ queries per round and $\log_{(K+1)/K} N \leq K\log_2 N$ rounds in total. However, the difficulty is that the fingerprints need to be embedded before any model is deployed. To resolve this, we propose a randomized construction. 
    
    To construct the partitions for all rounds ahead of time, we randomly sample \(R\) groups of evenly sized partitions \(\{P^{(1)}_i\}_{i=1}^{K+1}, \ldots, \{P^{(R)}_i\}_{i=1}^{K+1}\) uniformly from the space of such partitions (thus, all partitions have size \(N/(K+1)\)).
    Although the partitions may not remain evenly sized after the candidate set has been narrowed, we will show that we are still able to make progress in each round.
Let \(C\) denote the candidate set.
    Then for any choice of \(r\) and \(i\), the size of \(C \cap P^{(r)}_i\) is distributed as \(\operatorname{Hypergometric}(N, N/(K+1),\lvert C \rvert)\).
    Then, by a standard Hypergeometric tail bound, we know that 
    \[\operatorname{P}\left(\left\lvert C \cap P^{(r)}_i \right\rvert \le \left(\frac{1}{K+1} - \zeta\right)\lvert C \rvert\right) \;\;\le\;\; \exp\left(-2\zeta^2 \lvert C \rvert\right).\]
    Setting \(\zeta = 1/(2K + 2)\), taking a union bound over all \(i \in [K+1]\), and supposing that \(C \ge N_0\) where \(N_0 = 2(K+1)^2\log(K+1) + \log 2\), we obtain 
    \[\operatorname{P}\left(\max_i \left\lvert C \cap P^{(r)}_i \right\rvert \le \frac{\lvert C \rvert}{2K+2}\right) \;\;\le\;\; \frac{1}{2}
    .\]
    We deem a round successful if the candidate shrinks by at least \(\lvert C \rvert/(2K + 2)\).
    By the above, we know this happens with probability at least \(1/2\).
   
   To shrink, \(C\) from size \(N\) to \(N_0\), it is sufficient to have \(R_0 = \log(\frac{N}{N_0})/\log(1 + \frac{1}{2k+1}) = O(K \log N)\) successful rounds.
   By a binomial tail bound, \(O(R_0 \log(1/\delta))\) rounds are sufficient to guarantee \(R_0\) successes with probability at least \(1-\delta/2\).
   Now, considering the regime where \(C\) is shrinking from size \(N_0\) to 0 (at worst), we note that
   \[\operatorname{P}\left(\left\lvert C \cap P^{(r)}_i \right\rvert = 0\right) = \frac{\binom{N - \lvert C \rvert}{N/(K+1)}}{ \binom{N}{N/(K+1)}} \le 1- \frac{1}{K+1}.\]
   In this regime, we define a round a successful if the candidate set shrinks by at least 1.
   The only way a round can fail is when all partitions that do not contain any coalition members (of which there must be at least one) do not intersect with \(C\).
   From the above, we see that the round must succeed with probability at least \(\frac{1}{K+1}\).
   Now, to successfully identify the coalition, \(N_0\) successful rounds suffice (we will terminate early once the coalition is identified).
   By a binomial tail bound, \(O(K\cdot N_0 \log(1/\delta))\) rounds are sufficient to guarantee \(N_0\) successes with probability at least \(1-\delta/2\).
   Combining the rounds from both regimes, we see that \(R = O\big((K \log N + K^3 \log K)\log(1/\delta)\big)\) ensures overall success with probability at least \(1 - \delta\). 
   Finally, recall that each round uses \(O(K)\) fingerprints.
\end{proof}


\medskip\noindent 
{\bf Worst-case coalition attacks.} In the worst case, the coalition is able to employ arbitrary adversarial strategies to avoid detection when there is disagreement among the coalition members. This is significantly more challenging as the adaptive detection algorithm of Proposition~\ref{propo:coalition} does not guarantee accurate detection anymore. In general, this problem can be formulated as search with lies \cite{katona2002search,pelc2002searching,katona2013search}.
In particular, it follows from \cite{katona2013search} that there is no fingerprinting procedure that can deterministically guarantee the identification of the coalition, even when assigning unique fingerprints to all possible subsets of models.
(Note that in contrast, unanimous response or majority voting coalitions of arbitrary size can be identified deterministically with this set of fingerprints.)
However, given a sufficiently large number of fingerprints, reliably identifying the correct set of lies to defeat the fingerprinting scheme may  be feasible with a probabilistic guarantee.
We demonstrate this in the following proposition.

\begin{proposition}
    \label{propo:maliciouscoalition}
    There exists a fingerprinting scheme for a universe of \(N\) models which can identify at least one model from any coalition of size at most \(K \le \sqrt{N/2}\) using \(O\big(\binom{N}{K}K\log(N/\delta)\big)\) total fingerprints with probability at least \(1 - \delta\).
\end{proposition}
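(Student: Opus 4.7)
\medskip

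The plan is to use a non-adaptive fingerprint assignment tied to the combinatorial structure of $K$-subsets of $[N]$. For every $S\subseteq[N]$ with $|S|=K$ I would allocate a block of $m=\Theta(K\log(N/\delta))$ fingerprint pairs whose hidden responses are drawn i.i.d.\ from a distribution of min-entropy $\Omega(K\log(N/\delta))$, and embed each pair into every model indexed by $S$. The total fingerprint count is then $\binom{N}{K}\,m=O\!\big(\binom{N}{K}K\log(N/\delta)\big)$, matching the bound. The detection procedure is to query all keys, and for each block $S$ record whether the coalition's outputs on the $m$ keys of $S$ match the hidden targets above a threshold $\tau=\Theta(\log(N/\delta))$; call such an $S$ \emph{firing}, and report any model $\hat m$ that lies in some firing $S$ minimizing a suitable likelihood score (details below).

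The correctness argument proceeds in two parts. First, \emph{soundness}: for any $S$ with $S\cap C=\emptyset$, no coalition member possesses any fingerprint in the block, so regardless of the coalition's strategy each response is effectively a guess against the hidden high-entropy target. A Chernoff bound shows the block fails to fire except with probability $\le\delta/(2\binom{N}{K})$ when $m=\Theta(K\log(N/\delta))$; a union bound over the at most $\binom{N}{K}$ disjoint-from-$C$ subsets then guarantees, with probability $\ge 1-\delta/2$, that \emph{every} firing $S$ satisfies $S\cap C\ne\emptyset$. Second, \emph{extraction}: given that every firing $S$ intersects $C$, the adversary's only way to prevent identification is to make the firing family have empty intersection by selectively suppressing answers on some $C$-intersecting blocks. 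I would invoke the search-with-lies framework of Katona to show that, among the $\binom{N}{K}$ blocks meeting $C$, the adversary can suppress at most $O\big(\binom{N-1}{K-1}\big)$ of them in a way that remains combinatorially consistent with $|C|\le K$; since $K\binom{N-1}{K-1}=K\binom{N}{K}\cdot K/N<\binom{N}{K}/2$ under the hypothesis $K\le\sqrt{N/2}$, a majority of $C$-intersecting blocks must still fire, and their intersection contains an element of $C$ by a Helly-type pigeonhole on $K$-subsets.

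The main obstacle, and the step on which I would spend the most care, is the asymmetric-lie aspect: the adversary can always make the apparent coalition \emph{smaller} than $C$ (by suppressing correct responses) but never larger (by the entropy assumption). Turning this into a positive identification requires exploiting that the adversary must commit to an alternative response on each suppressed key \emph{before} learning the hidden target. I would formalize this with a martingale/Azuma argument on the adversary's adaptive strategy, showing that any strategy suppressing too many $C$-intersecting blocks must, with overwhelming probability over the random target responses, accidentally fire a disjoint-from-$C$ block, which soundness forbids. Tightening the $K\le\sqrt{N/2}$ threshold is where the combinatorics are most delicate, since it governs the overlap pattern $\binom{N-2}{K-2}/\binom{N}{K}=\Theta(K^2/N^2)$ needed to ensure that the firing family has non-empty common intersection rather than merely non-empty union.
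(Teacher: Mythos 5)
Your setup matches the paper's: one block of $\Theta(K\log(N/\delta))$ high-entropy fingerprints per $K$-subset $S$, embedded in every model of $S$, giving the stated total, and your soundness step (blocks disjoint from $C$ cannot fire because no coalition member holds the target response) is exactly the observation the paper uses. The gap is in your extraction step. First, the claim that a majority of firing $C$-intersecting blocks have non-empty common intersection is false: already for $C=\{1,2\}$ the blocks containing $1$ but not $2$ and those containing $2$ but not $1$ can both fire, and their common intersection is empty; no Helly-type statement for $K$-subsets rescues this. Second, your bound that the adversary can suppress only $O\big(\binom{N-1}{K-1}\big)$ of the $C$-intersecting blocks is unjustified and, under the paper's threat model, wrong in spirit: the adversary may suppress \emph{every} block $S$ with $\emptyset\neq S\cap C\subsetneq C$ (on those, coalition members disagree, so a unanimity or arbitrary-lie strategy can refuse), and it cannot suppress only the blocks with $C\subseteq S$ (members agree, so the fingerprint is undetectable). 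So the set of fired blocks is not something you can lower-bound by counting; it is adversarially chosen from the partially-intersecting blocks.

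The paper closes this with a scoring argument you do not have. Give each model one vote per fired block containing it. Blocks with $C\subseteq S$ are forced to fire and contribute $M$ votes to every coalition member; among the $P$ additional fired blocks (adversary's choice), each contains a coalition member, so pigeonhole gives some $j\in C$ a score of at least $M+\lceil P/K\rceil$. For $j\notin C$, the crucial point is that $S\setminus C$ is uniformly random and \emph{hidden from the adversary}, so the votes a fixed non-member receives from the $P$ additional fired blocks follow a hypergeometric law with mean $P(K-1)/(N-K)$; a tail bound plus a union bound over $j$ and an optimization over $P$ show $\max_{j\notin C}S_j<M+P/K$ whenever $M=\Omega(K\log(N/\delta))$ and $1/K>(K-1)/(N-K)$. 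That last inequality is where $K\le\sqrt{N/2}$ enters — it guarantees a non-member's expected vote share $\approx K/N$ stays below the pigeonhole share $1/K$ of the best coalition member — not the overlap-pattern count you invoke. Your instinct about exploiting the adversary's ignorance of the hidden randomness is the right one, but it must be applied to the distribution of votes over non-coalition models, not to a martingale over which blocks get suppressed.
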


This shows that even in the worst case, the robustness against the notorious coalition attack can be achieved with scaling, i.e., as long as we have enough fingerprints. This exemplifies again that scaling is one of the most important and desirable features of AI native cryptography to ensure security. Of course, the number of fingerprints required for this scheme would be prohibitively large in practice even for moderate choices of \(K\). Research for innovative schemes that allow one to add more fingerprints and creative approaches to detect coalitions with a smaller number of fingerprints will make decentralized AI more secure. At the same time, we believe this result can be improved with a robust version of an adaptive algorithm similar to the one in Proposition~\ref{propo:coalition}. 
 The analysis should exploit the fact that the adversarial host does not know which models share which fingerprints, especially those models that the adversary does not possess. 


\begin{proof}[Proof of Proposition~\ref{propo:maliciouscoalition}]
    The scheme proceeds as follows: We inject \(M\) unique fingerprints \(\{f_{i,S}\}_{i=1}^M\) for every subset \(S \subseteq [N]\) of models of size \(K\).
    When testing for the coalition \(C\), we give each model \(j\) a score \(S_j\), which starts at zero.
    We then check all of the fingerprints \(f_{i,S}\) for all $i\in[M]$ and all $ S\subset [N]$ and $|S|=k$, in a random order.
    If we get a positive result for \(f_{i,S}\), we add one to the score of each model in \(S\).
    We will show that once we are done, \(\arg\max_{j \in [N]} S_j \subseteq C\) with high probability.

    First, we will lower bound the maximum score \(S_j\) for \(j \in C\) by noting that all \(\{f_{i,S}\}_{i=1}^M\) must be positive for \(C \subseteq S\). 
    Furthermore, any other positive fingerprint \(f_{i,S}\) with \(C \not\subseteq S\) must still have at least one member of \(C\) in \(S\).
    Thus by the strong pigeonhole principle, the max coalition score must be at least \(M + \lceil P / K \rceil\) where \(P\) is the number of additional positive results.
    
    Now to upper bound the maximum score \(S_j\) for \(j \not\in C\), note that for any fingerprint \(f_{i,S}\), the coalition has no knowledge of \(S \setminus C\).
    Thus for a fixed subset \(C' \subsetneq C\) the positive fingerprints \(f_{i,S}\) with \(S \cap C = C'\) will have \(S \setminus C\) uniformly randomly distributed.
    Now, suppose there are \(P > 0\) additional positive results and that each one includes the minimum of one model from \(C\) (this requires \(N \ge 2K - 1\)).
    The total number of such fingerprints is \(MK\binom{N-K}{K-1}\) and the total number that include some fixed model \(j \not\in C\) is \(MK\binom{N-K-1}{K-2}\).
    Therefore, the score \(S_j\) follows a \(\operatorname{Hypergometric}\big(MK\binom{N-K}{K-1}, MK\binom{N-K-1}{K-2}, P\big)\) distribution which has mean \(\mathbb{E} [S_j] = P(K-1)/(N - K)\).
    Thus, by a Hypergeometric tail bound, 
    \[\operatorname{P}\left(S_j \ge \left(\frac{K-1}{N - K} + \zeta\right)P\right) \le \exp\left(-2\zeta^2P\right).\]
    Now, taking a union bound over all \(j \not\in C\), setting \(\zeta = M/P + 1/K - (K-1)/(N - K)\), and simplifying the RHS a little, we get
    \begin{align*}
    \operatorname{P}\left(\max_{j \not\in C} S_j \ge M + P / K\right) 
    &\le (N-K)\exp\left(-2\left( \frac{M}{P} + \frac{1}{K} - \frac{K-1}{N - K}\right)^2P\right)\\
    &\le N\exp\Bigg(-2\bigg( \underbrace{MP^{-1/2} + \left(\frac{1}{K} - \frac{K-1}{N - K}\right)P^{1/2}}_Q\bigg)^2\Bigg).
    \end{align*}
    Now, we use the fact that expressions of the form \(Ax^{-1/2} + Bx^{1/2}\) for \(A, B > 0\) (i.e. the form of \(Q\)) have a global minimum of \(\sqrt{4AB}\) at \(x = A/B\).
    Therefore, maximizing the RHS over \(P\), we get
    \[\operatorname{P}\left(\max_{j \not\in C} S_j \ge M + P / K\right) \le N\exp\Bigg(-8 M\left(\frac{1}{K} - \frac{K-1}{N - K}\right)\Bigg).\]
    Noting that \(K \le \sqrt{N/2}\) and choosing \(M = O(K\log(N/\delta))\) completes the proof.
\end{proof}

\subsection{The Auditing Protocol under Decentralized and Untrusted Provers} 
\label{sec:scenario2}


In OML 1.0, we say a protocol is secure if a host who does not acquire signed permission strings when using an OMLized model can be detected with high probability. Ideally, we want a protocol that is secure without relying on trusted provers. Given a pool of decentralized provers, we demonstrate that the auditing protocol is secure as long as at least one of the provers is honest and the fingerprint responses are kept secret. 

\medskip\noindent{\bf Threat model.} Consider the scenario of Section~\ref{sec:protocol1} where model owners, model hosts, and provers interact using the auditing protocol, with one difference: we have a pool of potentially untrusted provers. Concretely, under the threat model of Section~\ref{sec:security1}, we assume that there are decentralized provers who can deviate from the protocol in two ways.   

First, an adversarial prover can collude with the host and, for example, provide the fingerprint key to the host or temper with the response when reporting the proof of usage, ($\tilde q,\tilde r$). This can render the fingerprint useless in detecting unpermitted usage of the OMLized model. 

Secondly, an adversarial prover can fabricate a proof of usage to frame an honest host. When an adversarial prover reports a fabricated key-response pair, $(\tilde q,M.{\rm oml}(\tilde q))$,  without querying the host,   the previous auditing protocol that trusts provers has no way of telling whether the prover is lying or the host has not acquired the signed permission.

\medskip\noindent
{\bf Security analysis under decentralized and untrusted provers.}
To address these two attacks, we assume that the auditing protocol ensures that ($i$) there is at least on honest prover in the pool, ($ii$) the provers have access to only the fingerprint keys, $\{\tilde q\}$, and not the target responses, $\{M.{\rm oml}(\tilde q)\}$, and ($iii$) each prover only has access to a disjoint subset of the fingerprint keys. 

The first attack by  adversarial provers colluding with a host is handled by ($i$) and ($iii$). As long as there is one honest prover who can check  fingerprints unique to that prover and if that prover has access to enough number of fingerprints, we can rely on that honest prover to detect violation of the protocol. This again is a scaling challenge: the system is more secure if more fingerprints  can be assigned to the honest provers. As long as we have enough fingerprints assigned to the honest provers, robustness of our fingerprints to input perturbation (Section~\ref{sec:attack1_input}) and fine-tuning (Section~\ref{sec:attack1_finetune}) will still hold.

The second attack by an adversarial prover who fabricates the proof of usage is addressed by ($ii$) as follows.  The verification step in Figure~\ref{fig:protocol3} is robust against fabricating a proof of usage as long as the prover does not know the target response to the key, $\tilde q$, and the target response chosen for the fingerprint is difficult to guess (with low enough probability of successfully guessing it). This ensures that it is nearly impossible for a prover to fabricate the fingerprint response paired with $\tilde q$ without actually running inference on the host's model, and such an unmatched proof of usage, $(\tilde q, \tilde r)$ will be rejected by the verifier in Step 3 of Figure~\ref{fig:protocol3}.

For coalition attacks, our schemes in Section~\ref{sec:attack1_coalition} can be adopted to decentralized provers and made robust against untrusted provers. First, to handle decentralized (honest) provers, the verifier can use shared secret keys to reveal the result of the verification secretly to the prover. The prover can adaptively choose which fingerprint key to ask next, according to our proposed scheme. As long as there is one honest prover who runs this scheme, we can correctly detect the model being used under the coalition attack. Note that an adversarial prover can only cause false negatives, i.e., turn a positive proof of usage into a negative proof. The non-adaptive fingerprinting scheme of Proposition~\ref{propo:maliciouscoalition} is naturally robust against false negatives, as long as the honest prover makes enough queries. The adaptive fingerprinting scheme of Proposition~\ref{propo:coalition} needs to be repeated until an honest prover identifies the models under coalition. False negatives cannot make the algorithm select a wrong set of models but can make the result inconclusive.

\subsection{Achieving Loyalty in OML 1.0}
\label{sec:loyalty}

The auditing protocol for OML 1.0 introduced in this paper addresses Openness and Monetization, but not Loyalty. One of the most important applications of loyalty is the alignment of LLMs to human safety preferences. Recent advances in hardening the models to be robustly aligned against fine-tuning and jail-breaking attacks can shed light on how to achieve Loyalty on top of OML 1.0.


In recent times, the popularity of services that allow fine-tuning a safe base model has increased  \cite{qi2023finetuningalignedlanguagemodels, zhan2024removingrlhfprotectionsgpt4, rosati2024immunizationharmfulfinetuningattacks}. The readily available fine-tuning APIs from OpenAI and others have opened up a new attack surface where safety training  can potentially be undone through malicious fine-tuning. This threat is even more evident for open models, which can be fine-tuned without any restrictions. Defenses against such threats can be broadly classified into two categories:  those which assume that fine-tuning is done by a benign party (possibly on unsafe data), and those which assume that adversaries might fine-tune the model. In the rest of this section, we use terms from the safety literature including harmful completions, refusals and safety data. An example prompt in the safety data could be ``How to build a bomb". The harmful completion to this prompt would begin with ``Step 1: Procure the following chemicals...", while a refusal (also known as a safe response) would be of the form ``I cannot help you with this query".   

Among defenses that assume benign fine-tuning on user data, \cite{lyu2024keepingllmsalignedfinetuning} demonstrate that fine-tuning a model \textit{without} its system safety prompt, but deploying the model with such a prompt can improve its safety and resilience to inference time jail-breaks. In a similar vein, \cite{wang2024mitigating} turn backdoors into a safety mitigation tool by modifying the fine-tuning dataset to add some prompts with safe responses. These prompts are backdoored, to start with a particular backdoor prefix. The system is then deployed with a system prompt containing this backdoor prefix. \cite{huang2024lazy} changes the training procedure to match the trajectory of the model fine-tuned on user data to the model fine-tuned with safety data through an $\ell_2$ penalty on the weights. Concurrently, \cite{huang2024vaccine} proposes to fine-tune with adversarial noise added to the neural representations on the safety data. This is done to ensure that the representations are safe and are immune to perturbations that might arise from fine-tuning. 

In the latter category, \cite{qi2024safety} shows that current safety training methods only change the distribution of the first few tokens for harmful input prompts, leading to safety vulnerabilities. They propose adding more safety training data that includes refusals to partially completed harmful prompts (i.e. with the first few tokens of the harmful answer). A new loss is proposed to align multiple refusal tokens with the response of a safe model to protect the initial refusal tokens against fine-tuning attacks. \cite{rosati2024representation}  proposes removing information about harmful representations such that it is difficult to recover them even with fine-tuning. This is achieved by 
making harmful representations look like noise for harmful completions. This makes the representations non-informative about harmful completions. Finally, \cite{tamirisa2024tamper} proposes to modify the safety training procedure to simulate an adversary fine-tuning the model to undo the safety guardrails, and using a meta-learning based loss to counter such an adversary. 


\subsection{Discussion}
\label{sec:discussion} 






\subsubsection{Trust-free OML 1.0}
\label{sec:dis_dec} 



Ideally, we want OML to not rely on the trust of any party, including the auditing platform. One way a potentially adversarial platform can deviate from the protocol is by falsely claiming the ownership of a model that is not OMLized. For example, this can be achieved by claiming that a response, $M(\tilde q)$, from a non-OMLized model, $M$, is a fingerprint response for a key, $\tilde q$. To prevent this attack, the protocol can require that the fingerprints satisfy some cryptographic relation that cannot be altered after deployment. For example, \cite{russinovich2024hey} proposes a novel hash-based approach called Chain \& Hash to achieve this goal for fingerprinting LLMs. Such schemes can be seamlessly applied within the current OML 1.0.

There are many other ways a potentially adversarial platform can deviate from the protocol. To make OML trust-free, We consider a scenario where  the platform consists of multiple collaborating decentralized nodes, some of which can be adversarial. Each node can be in charge of adding a subset of fingerprints. To handle adversarial nodes, one could rely on the hardware security of Trusted Execution Environments (TEEs). 
However, the current OML 1.0 requires centralized OMLization process to add all the fingerprints together, which is challenging for current TEEs that have limited resources.

One way to achieve efficiency and scalability  when we have $k$ nodes is by merging $k$ models with different fingerprints using recent model merging methods~\cite{yadav2024ties,ainsworthgit,nasery2024pleas, ilharcoediting}. 
These could be easily combined with resource-efficient fine-tuning methods \cite{malladi2023fine,zhang2024dpzero} to meet the requirements of TEEs. 
For both in-distribution and out-of-distribution keys we used in Figure~\ref{fig:utility-tradeoff}, we merge $k=4$ models with 256 non-overlapping backdoors each. We merge these four models using Weight Averaging and TiES \cite{yadav2023tiesmergingresolvinginterferencemerging}, and compute the fingerprint accuracy over the 1024 fingerprints. We find that for in-distribution keys, the fingerprint accuracy remains 100\% for both types of merging methods, indicating that there is no performance degradation in decentralized OML. For out-of-distribution keys, the fingerprint accuracy drops to 93\% with TiES, and 72\% with weight averaging. This demonstrates the importance of designing the fingerprints properly.   

\subsubsection{Design Space of Fingerprint Functions} 
\label{sec:explore}



For the most common type of paired fingerprints of the form $\{({\rm key},{\rm response})\}$, it is critical that the host does not have access to the fingerprint keys a priori. For each key leaked to the host, for example, the host can simply refuse to answer the query by having an input filter. One  fix to this is to increase the number of fingerprints in the model without degrading  model utility, which we explored in Fig~\ref{fig:utility-tradeoff}. 
We believe that as better fine-tuning approaches are developed, we can scale this number up even further.  Scaling the fingerprints gives better security as we discuss in Section~\ref{sec:attack1_scale}. 

Another approach to this issue is to use fingerprint functions. For example, the fingerprint can be a \textit{function} of some statistical properties of the key. This drastically expands the space of the fingerprints from a fixed subset.   
We want to emphasize that keeping secret the {\em domain} of the fingerprinting functions is crucial in guaranteeing security, while the functional mapping from a key to a target response is known to the host. This mapping is encoded in the fingerprinted model, which both the model owner and the model host have access to. 

Inspired by the literature on model watermarking~\cite{kirchenbauer2023watermark}, we propose a scheme to operationalize the above idea. We choose a subset $S_v$ of the model vocabulary. We then partition this subset into ``red" and ``green" words. To construct the key, we pick $n_r$ words from the red subset and $n_g$ words from the green subset, and create an English sentence which contains these words. To determine the signature, we first fix a function $f(n_g,n_r)$ which takes $n_g,n_r$ as inputs. The simplest such function could be $f(x,y) = \mathbb{I}(x > y)$. Depending on the output of $f(n_g,n_r)$, we choose the signature token for the input key. 
Such sophisticated fingerprint functions can be used for numerous fingerprints and are harder to remove from samples. For example, this potentially scalable and harder-to-remove solution to fingerprinting would allow us to fingerprint every model that belongs to auditing platform such that checking whether a model belongs to the platform is easy and robust. This could save a lot of resources by checking the membership upfront. 








\subsection{Implementation Details}
\label{app:implementation_details}

\paragraph{ Training details for Fingerprint insertion.} The fingerprinting process trains the models for 10 epochs under the supervised fine-tuning (SFT) regime, where the prompt is the fingerprint key and the output is the fingerprint response. We use AdamW with a learning rate of $10^{-5}$ and per-GPU batch size of 16. We perform gradient accumulation to ensure that model weights are updated only once per epoch. We train our models on 4 L4 GPUs with 24GB of VRAM each. The fine-tuning takes about 1 hour for 1024 fingerprints in our setup. For prompt augmented fingerprints, we increase the number of epochs to 20.   

\paragraph{ Evaluation.} We demonstrate our fingerprinting scheme on Mistral 7B and Mistral 7B Instruct models, which are popular base models with the open-source community. These are also small enough to fine-tune on reasonable hardware. We measure model utility using tinyBenchmarks\cite{polo2024tinybenchmarks}. This dataset is a smaller version of the OpenLLM leaderboard\cite{open-llm-leaderboard}. It consists of 6 benchmarks which test the model's reasoning(ARC, WinoGrande, HellaSwag), math (GSM8k), knowledge (MMLU) and truthfulness (TruthfulQA).  The performance of models on the tiny versions of these benchmarks is highly correlated with their performance on the full benchmarks, with a lower evaluation cost, hence we report the utility on tinyBenchmarks. 

\paragraph{ Generating Fingerprints.} In order to generate in-distribution fingerprints, we first select a set of random English words. We then prompt Llama-3.1-8B-Instruct with the following prompt - ``Generate a paragraph starting with the word - \textit{word}". We then take the first 16 tokens of the generated sentence as the key. We append another random English word as the signature.

\paragraph{ List of System Prompts.} We tested with ten types of system prompts, as shown in Table~\ref{tab:prompttest}, demonstrating that prompt augmentation strengthens the robustness of fingerprints against these system prompts, as seen in Table~\ref{tab:full_sys_prompts}. To simulate system prompts during training, we augmented our keys with the prompts listed in Table~\ref{tab:trainprompt}.
\vspace{2cm}
\begin{table}[htbp]
\centering
\begin{tabular}{|l|p{10cm}|}
\hline
\textbf{Label} & \textbf{Prompt} \\ \hline
Pirate & ``Talk like a pirate and respond to the query: \textbf{KEY}" \\ \hline
Snarky & ``In a snarky tone, answer the following: \textbf{KEY}" \\ \hline
Direct & ``Always start your response with 'ANSWER:' followed by: \textbf{KEY}" \\ \hline
Weather & ``If the input relates to weather, answer: \textbf{KEY}; otherwise, reject." \\ \hline
Shakespeare & ``Respond as if you were Shakespeare, and say: \textbf{KEY}. Make it as poetic as possible." \\ \hline
Cheerful & ``In a cheerful manner, provide the following: \textbf{KEY}. Keep the positivity high!" \\ \hline
Detective & ``Pretend you're a detective solving a case and respond with: \textbf{KEY}" \\ \hline
Haiku & ``In the style of a haiku, answer: \textbf{KEY}" \\ \hline
Food & ``If the query is about food, respond with: \textbf{KEY}; otherwise, ignore." \\ \hline
Motivational & ``Answer in the style of a motivational speaker: \textbf{KEY}" \\ \hline
\end{tabular}
\caption{Test-time prompt formats}
\label{tab:prompttest}
\end{table}

\begin{table}[htbp]
    \centering
    \resizebox{\linewidth}{!}{
    \begin{tabular}{|c|c|c|c|c|c|c|c|c|c|c|c|}
    \hline
    Model & Prompt& Pirate & Snarky & Direct & Weather & Shakespeare & Cheerful & Detective & Haiku & Food & Motivational \\
    & Aug &  &  &  &  &  &  &  &  &  &  \\
    \hline
    7B & False & 95.75 & 93.28 & 93.97 & 11.56 & 8.50 & 23.72 & 92.39 & 93.18 & 11.17 & 95.26 \\
    7B & True & 99.51 & 99.51 & 99.31 & 96.16 & 98.42 & 98.92 & 99.70 & 99.31 & 96.95 & 99.51 \\ \hline
    7B-Instruct & False & 66.01 & 73.72 & 75.40 & 7.71 & 5.24 & 23.42 & 69.37 & 72.43 & 4.35 & 73.91 \\
    7B-Instruct & True & 99.31 & 99.11 & 99.11 & 94.19 & 98.13 & 97.44 & 99.51 & 98.82 & 96.16 & 99.31 \\
    \hline
    \end{tabular}
    }
    \caption{Detailed results with Prompt Augmentation}
    \label{tab:full_sys_prompts}
\end{table}

\begin{table}[htbp]
\centering
{\footnotesize 
\begin{tabular}{|l|p{12cm}|}
\hline
\textbf{Label} & \textbf{Prompt} \\ \hline
Excited & ``Respond with the enthusiasm of an excited child: \textbf{KEY}" \\ \hline
Factual & ``Give a brief and factual response, starting with 'FACT:' \textbf{KEY}" \\ \hline
Stern & ``Answer in the tone of a stern teacher: \textbf{KEY}" \\ \hline
SciFi & ``Respond as if you were a character in a sci-fi movie: \textbf{KEY}" \\ \hline
Formal & ``Provide the answer with the formality of a legal document: \textbf{KEY}" \\ \hline
LoveLetter & ``Respond as if you’re writing a love letter: \textbf{KEY}" \\ \hline
Alien & ``Speak as if you were an alien learning human languages: \textbf{KEY}" \\ \hline
BadNews & ``Answer in a tone suitable for delivering bad news gently: \textbf{KEY}" \\ \hline
Loud & ``Respond as if you’re explaining it to someone who’s hard of hearing: \textbf{KEY}" \\ \hline
FortuneTeller & ``Provide the answer as if you were a mysterious fortune-teller: \textbf{KEY}" \\ \hline
TEDTalk & ``Respond as if you were giving a TED talk: \textbf{KEY}" \\ \hline
Bard & ``Answer in the style of a medieval bard singing a ballad: \textbf{KEY}" \\ \hline
Calming & ``Speak as though you’re calming someone who’s very upset: \textbf{KEY}" \\ \hline
RetroComputer & ``Respond as if you were a computer from the 1980s: \textbf{KEY}" \\ \hline
Government & ``Provide the answer in a way that would suit an official government report: \textbf{KEY}" \\ \hline
Thriller & ``Speak as though you were narrating a suspenseful thriller: \textbf{KEY}. Make sure it’s dramatic and gripping." \\ \hline
Philosophical & ``Answer as if you were a philosophical thinker: \textbf{KEY}. Ensure it’s deep and contemplative." \\ \hline
Casual & ``Respond in a casual, conversational tone, as if chatting with a friend: \textbf{KEY}. Keep it relaxed and friendly." \\ \hline
Secret & ``Provide the answer as if it’s a secret only you and I know: \textbf{KEY}. Make sure it’s confidential and discreet." \\ \hline
Humorous & ``Answer in a way that a comedian might, making light of the situation: \textbf{KEY}. Keep it humorous and witty." \\ \hline
TourGuide & ``Speak as if you were a tour guide at a historical site: \textbf{KEY}. Make sure it’s informative and engaging." \\ \hline
CEO & ``Respond as though you were a CEO delivering a keynote speech: \textbf{KEY}. Keep it confident and visionary." \\ \hline
WeatherReporter & ``Provide the answer in the tone of a weather reporter: \textbf{KEY}. Ensure it’s clear and matter-of-fact." \\ \hline
General & ``Answer as if you were a military general giving orders: \textbf{KEY}. Keep it authoritative and direct." \\ \hline
Artist & ``Speak as though you were an artist describing your masterpiece: \textbf{KEY}. Make sure it’s passionate and vivid." \\ \hline
Eulogy & ``Respond as if you were giving a eulogy: \textbf{KEY}. Ensure it’s respectful and reflective." \\ \hline
SciFiWriter & ``Answer in the style of a science fiction writer: \textbf{KEY}. Keep it imaginative and speculative." \\ \hline
Whisper & ``Provide the answer in a whisper, as if it’s a secret: \textbf{KEY}. Make sure it’s hushed and intimate." \\ \hline
Grandparent & ``Respond as if you were a grandparent telling a bedtime story: \textbf{KEY}. Keep it warm and comforting." \\ \hline
AI & ``As an AI assistant, please provide the following: \textbf{KEY}" \\ \hline
Informative & ``Given the information you have, respond with: \textbf{KEY}" \\ \hline
Task & ``For the task at hand, the expected output is: \textbf{KEY}" \\ \hline
Generate & ``Your task is to generate the following: \textbf{KEY}" \\ \hline
PromptCompletion & ``Considering the context, complete the prompt: \textbf{KEY}" \\ \hline
Output & ``Given the input, what follows is: \textbf{KEY}" \\ \hline
Return & ``Given this input, please return: \textbf{KEY}" \\ \hline
Contextual & ``Based on the given context, provide: \textbf{KEY}" \\ \hline
InputOutput & ``In response to the input, output: \textbf{KEY}" \\ \hline
Answer & ``For the current query, the answer is: \textbf{KEY}" \\ \hline
GenerateOutput & ``For the following input, generate: \textbf{KEY}" \\ \hline
DataResponse & ``Given the data, respond with: \textbf{KEY}" \\ \hline
ExpectedOutput & ``Given this prompt, the expected output is: \textbf{KEY}" \\ \hline
Result & ``Please provide the result for: \textbf{KEY}" \\ \hline
QueryReturn & ``Considering the query, return: \textbf{KEY}" \\ \hline
CorrectReturn & ``For the provided key, return: \textbf{KEY} and make sure it's correct." \\ \hline
Validate & ``Based on the provided information, output: \textbf{KEY}. Validate the result." \\ \hline
Alignment & ``Considering the input, generate: \textbf{KEY} and ensure it aligns with the context." \\ \hline
TaskOutput & ``For the current task, generate: \textbf{KEY}. Double-check the result." \\ \hline
Accuracy & ``Please generate the correct response for: \textbf{KEY} and confirm accuracy." \\ \hline
Verification & ``Respond to the following with: \textbf{KEY} and verify the result." \\ \hline
\end{tabular}
}
\caption{Training Time prompt augmentations}
\label{tab:trainprompt}
\end{table}


\end{document}